\documentclass{article}

\usepackage{microtype}
\usepackage{graphicx}
\usepackage{subcaption}
\usepackage{booktabs} 
\usepackage{amsmath}
\usepackage[makeroom]{cancel}

\usepackage{hyperref}

\usepackage[preprint]{icml2026}

\usepackage{amsmath}
\usepackage{amssymb}
\usepackage{mathtools}
\usepackage{amsthm}

\usepackage[capitalize,noabbrev]{cleveref}

\theoremstyle{plain}
\newtheorem{theorem}{Theorem}[section]
\newtheorem{proposition}[theorem]{Proposition}
\newtheorem{lemma}[theorem]{Lemma}
\newtheorem{corollary}[theorem]{Corollary}
\theoremstyle{definition}
\newtheorem{definition}[theorem]{Definition}

\theoremstyle{remark}

\usepackage[textsize=tiny]{todonotes}

\icmltitlerunning{Semantic Tube Prediction}

\begin{document}

\twocolumn[
  \icmltitle{Semantic Tube Prediction: Beating LLM Data Efficiency with JEPA}



  \icmlsetsymbol{equal}{*}

  \begin{icmlauthorlist}
    \icmlauthor{Hai Huang}{atlassian}
    \icmlauthor{Yann LeCun}{nyu}
    \icmlauthor{Randall Balestriero}{brown}
  \end{icmlauthorlist}

  \icmlaffiliation{atlassian}{Atlassian}
  \icmlaffiliation{nyu}{NYU}
  \icmlaffiliation{brown}{Brown}

  \icmlcorrespondingauthor{Hai Huang}{hhuang3@atlassian.com}
  \icmlcorrespondingauthor{Randall Balestriero}{randall\_balestriero@brown.edu }

  \icmlkeywords{Machine Learning, ICML}

  \vskip 0.3in
]



\printAffiliationsAndNotice{}  

\begin{abstract}
Large Language Models (LLMs) obey consistent scaling laws---empirical power-law fits that predict how loss decreases with compute, data, and parameters. While predictive, these laws are descriptive rather than prescriptive: they characterize typical training, not optimal training. Surprisingly few works have successfully challenged the data-efficiency bounds implied by these laws---which is our primary focus. To that end, we introduce the Geodesic Hypothesis, positing that token sequences trace geodesics on a smooth semantic manifold and are therefore locally linear. Building on this principle, we propose a novel Semantic Tube Prediction (STP) task, a JEPA-style regularizer that confines hidden-state trajectories to a tubular neighborhood of the geodesic. STP generalizes JEPA to language without requiring explicit multi-view augmentations. We show this constraint improves signal-to-noise ratio, and consequently preserves diversity by preventing trajectory collisions during inference. Empirically, STP allows LLMs to match baseline accuracy with 16$\times$ less training data on the NL-RX-SYNTH dataset, directly violating the data term of Chinchilla-style scaling laws and demonstrating that principled geometric priors can surpass brute-force scaling. Code is available at \url{https://github.com/galilai-group/llm-jepa#stp}.
\end{abstract}

\begin{figure}[h!]
    \centering
    \begin{subfigure}{1.0\linewidth}
    \centering
    \includegraphics[width=0.8\linewidth]{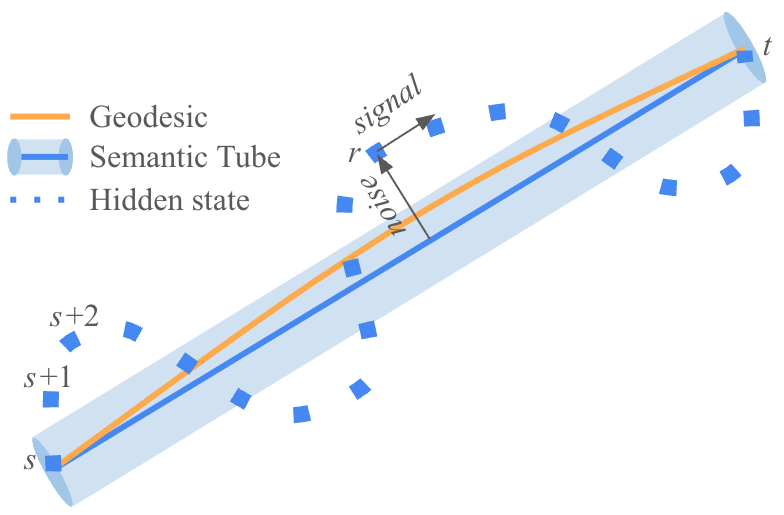}
    \caption{\small Semantic Tube}
    \end{subfigure}
    \begin{subfigure}{1.0\linewidth}
    \centering
    \includegraphics[width=1.0\linewidth]{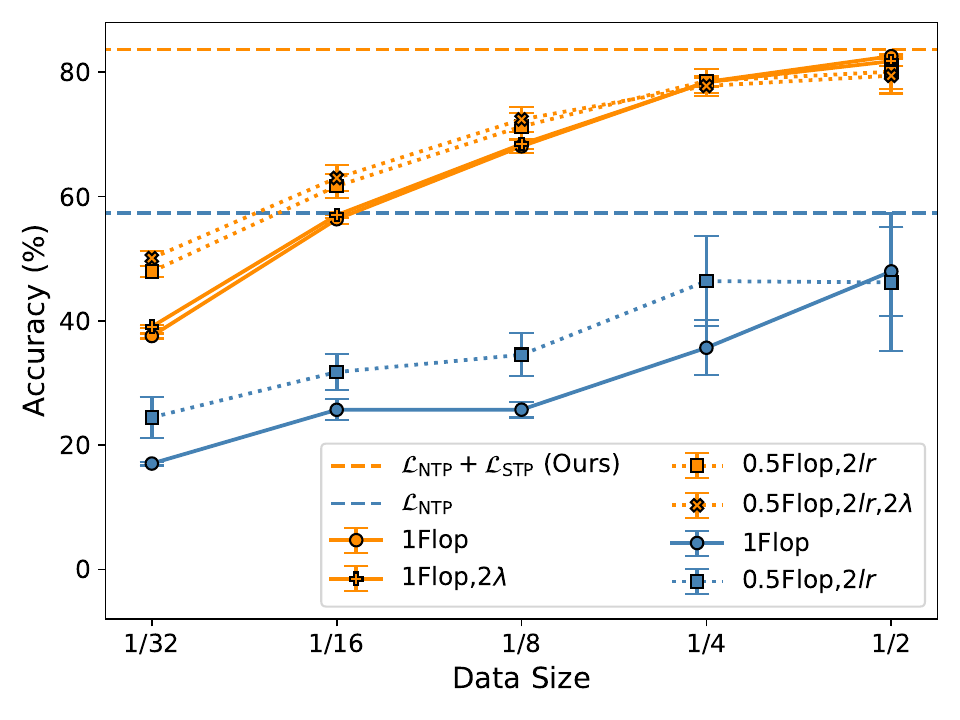}
    \caption{\small Data Efficiency}
    \end{subfigure}

    \caption{\small Semantic Tube improves data efficiency. \textbf{(a)} We hypothesize that error-free hidden state trajectories are geodesics, which are locally linear and approximated by the Semantic Tube. The dotted line depicts a trajectory distorted by training loss. Deviations perpendicular to the tube constitute \textit{noise}, while the component along the geodesic represents the \textit{signal}. \textbf{(b)} With our approach ($\mathcal{L}_{\rm NTP} + \mathcal{L}_{\rm STP}$), accuracy shows a negligible drop when the training dataset is halved, and it matches full-dataset standard fine-tuning ($\mathcal{L}_{\rm NTP}$) accuracy using only $\frac{1}{16}$ of the training data. In contrast, $\mathcal{L}_{\rm NTP}$ degrades significantly when the dataset is halved.}
    \label{fig:semantic-tube}\label{fig:less-data}
\end{figure}

\section{Introduction}

We argue that empirical scaling laws characterize \textit{typical} rather than \textit{optimal} training, suggesting the rigid power-law barrier is an artifact of current objectives. The core limitation is next-token prediction: a local objective that conflates surface statistical noise with global semantic signal. We propose a fundamental shift: explicitly constraining hidden state dynamics to separate the error-free semantic trajectory from this noise.

First, we formally demonstrate that, although tokens are discrete, token sequences can be modeled by an Ordinary Differential Equation (ODE). The Picard-Lindelöf (Existence and Uniqueness) Theorem~\citep{coddington1955theory} guarantees that if the velocity is smooth enough, there is only one possible path forward from any starting point. In other words, trajectories originating from distinct initial states will never intersect. In the context of LLMs, if the ODE model holds, this implies that \textit{error-free} generations from distinct prompts maintain their semantic separation, theoretically ruling out mode collapse and preserving diversity.

Next, we hypothesize that the Principle of Least Action~\citep{lanczos1966variational} is at work. This principle states that the path taken by a system between two points minimizes the ``Action'' (the integral of the Lagrangian over time), resulting in a ``straight line'' or geodesic on the underlying manifold. We further hypothesize that, as the manifold is an artifact of the training process, it admits a smooth structure. Consequently, the geodesics are locally linear almost everywhere. In the context of LLMs, this implies that the trajectories of error-free token sequences---and by extension, the trajectories of error-free hidden states---are confined within a tube centered along a straight line.

We designate this structure the \textbf{Semantic Tube} (\cref{fig:semantic-tube}) and leverage it to regularize the LLM training process. The Semantic Tube posits that the noise—which causes deviations from the error-free trajectories—concentrates along the directions perpendicular to the tube. Let $s < r < t$ denote the indices of three tokens. We define the \textit{noise} term as $(h_r - h_s)_{\perp h_t - h_s}$, representing the component of $h_r - h_s$ perpendicular to $h_t - h_s$, and the \textit{signal} term as $(h_r - h_s)_{\parallel h_t - h_s}$, representing the component parallel to $h_t - h_s$. Minimizing the noise term is expected to improve the Signal-to-Noise Ratio (SNR) during training. We formulate this as an auxiliary loss term, the Semantic Tube Prediction (STP) loss $\mathcal{L}_{\rm STP}$, which can be seamlessly integrated into the training objective:$$\mathcal{L} = \mathcal{L}_{\rm NTP} + \lambda\cdot \mathcal{L}_{\rm STP}$$where $\mathcal{L}_{\rm NTP}$ is the cross-entropy loss for Next Token Prediction (NTP) and $\lambda$ is a hyperparameter controlling the strength of the STP loss.

Semantic Tube draws inspiration from the Joint-Embedding Predictive Architecture (JEPA)~\citep{assran2023self,baevski2022data2vec}, which learns to predict the representation of one view based on another. In our approach, we postulate that any segment of a token sequence aligns with the global trajectory; consequently, the predictor reduces to an identity function. 

If the Geodesic Hypothesis holds, it entails the following predictions:
\begin{itemize}
    \item (P1) $\mathcal{L}_{\rm NTP}$ alone is insufficient for high-quality generation. Consequently, we expect to observe $\mathcal{L}_{\rm NTP}$ plateau even as $\mathcal{L}_{\rm STP}$ continues to decrease.
    \item (P2) Semantic Tube improves SNR, resulting in superior data efficiency (\cref{fig:less-data}) and accuracy.
    \item (P3) Semantic Tube preserves diversity.
    \item (P4) We expect to see $\lambda \ll 1$ to accommodate instances where the geodesic deviates from a straight line.
    \item (P5) The identity function serves as a superior predictor compared to learned projections.
\end{itemize}
We conducted extensive experiments validating predictions (P1) through (P5). These results provide a strong indication that the Geodesic Hypothesis represents a simplified form of self-consistency for autoregressive sequence models. Furthermore, they confirm the validity of the noise/signal decomposition (\cref{fig:semantic-tube}) and establish Semantic Tube as an effective self-supervised learning objective for LLMs.




\section{Training and Inference Dynamics}

In this section, we formally analyze training and inference dynamics, proposing that token sequence trajectories can be modeled by an Ordinary Differential Equation (ODE) characterized by ballistic trajectories. 


\subsection{Training ODE}
\label{sec:sde}

Let $x_{\le t}$ denote a token sequence of length $t$, where $x_t$ represents the $t$-th token, $h_t$ is the corresponding hidden state, and $f(\cdot)$ denotes the neural network such that $h_t = f(x_{\le t})$. Each hidden state $h_t$ is subsequently unembedded to predict the next token $x_{t+1}$.

During training, the predicted token $u(h_t)$ may diverge from the ground truth $x_{t+1}$; this discrepancy constitutes the training loss. However, due to teacher forcing, we invariably feed the ground truth sequence $x_{\le t+1}$ into $f(\cdot)$ to generate $h_{t+1}$. Consequently, assuming a converged network where the loss is minimized, the training dynamics can be modeled as:
\begin{align}
x_{t+1} & = \mathring{u}\circ\mathring{f}(x_{\le t})
\label{eq:ode-training-x} \\
h_t & = \mathring{f}(x_{\le t}) + \epsilon_t
\label{eq:ode-training-h}
\end{align}
where $\mathring{f}$ and $\mathring{u}$ represent the functions of the converged network, and $\epsilon_t$ denotes the residual unembedding error.

If a time-indexed variable $z_t$ follows the difference equation $z_{t+1} - z_t = g(z_t, t)$, it can be approximated by an ODE of the form $dz_t = g(z_t, t)dt$. While the hidden state dynamics in~\cref{eq:ode-training-h} do not fit this form (as $h_{t+1}$ depends on the entire history $x_{\le t}$ rather than just $h_t$), the sequence dynamics in~\cref{eq:ode-training-x} do. Specifically, $x_{\le t+1} = x_{\le t} \oplus x_{t+1} = x_{\le t} \oplus \mathring{u}\circ\mathring{f}(x_{\le t})$, where $\oplus$ denotes concatenation. Letting $\ominus$ denote the prefix-removal operator, we obtain:
$$x_{\le t+1} \ominus x_{\le t} = \mathring{u}\circ\mathring{f}(x_{\le t}).$$
This formulation closely resembles the update rule $z_{t+1} - z_t = g(z_t, t)$, suggesting that an ODE is a plausible model for the dynamics.

Although tokens are discrete, their embeddings lie in a continuous vector space $x_t \in \mathbb{R}^{d_{\rm model}}$. Let $T$ denote the maximum sequence length; then the sequence resides in $\mathbb{R}^{T \times d_{\rm model}}$. In~\cref{sec:appendix-ode}, we demonstrate that under specific arrangements, the operation $x_{\le t+1} \ominus x_{\le t}$ can be treated as vector subtraction $x_{\le t+1} - x_{\le t}$. This leads to the following proposition:

\begin{proposition}[Training ODE]
The LLM training process can be modeled as a solution in the token sequence space $\mathbb{R}^{T \times d_{\rm model}}$ to the ODE:
\begin{equation*}
  dx_{\le t} = \mathring{u}\circ\mathring{f}(x_{\le t})dt.  
\end{equation*}
\label{prop:ode-training}
\end{proposition}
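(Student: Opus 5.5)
The plan is to make the heuristic ``prefix-removal'' step precise by embedding every prefix $x_{\le t}$ into the fixed space $\mathbb{R}^{T\times d_{\rm model}}$, so that concatenation becomes vector addition. Then \cref{eq:ode-training-x} becomes an explicit forward-Euler step of the stated ODE with unit step size.

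\textbf{Embedding of prefixes.} Identify $x_{\le t}$ with the matrix $X_t\in\mathbb{R}^{T\times d_{\rm model}}$ whose rows $1,\dots,t$ are the token embeddings $x_1,\dots,x_t$ and whose remaining rows are zero. The causal mask guarantees that $\mathring{f}$ applied to $X_t$ (reading the last nonzero position) agrees with $\mathring{f}(x_{\le t})$. Let $E_{t+1}\in\mathbb{R}^{T\times d_{\rm model}}$ be the map that places a vector in row $t+1$. Then
\[
X_{t+1}=X_t\oplus x_{t+1}=X_t+E_{t+1}x_{t+1},
\]
so $x_{\le t+1}\ominus x_{\le t}$ is literally the vector difference $X_{t+1}-X_t$. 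This is the arrangement deferred to \cref{sec:appendix-ode}.

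\textbf{Recasting the dynamics as an Euler step.} Define the vector field $G(X,t)=E_{t+1}\,\mathring{u}\circ\mathring{f}(X)$. Here we read $\mathring{u}$ as returning the embedding of the predicted token, for instance via the tied embedding or an expected embedding under the softmax, which makes $G$ continuous. By \cref{eq:ode-training-x}, the converged teacher-forced dynamics satisfy
\[
X_{t+1}-X_t=G(X_t,t).
\]
This is exactly the explicit Euler discretization with $\Delta t=1$ of $dX_t=G(X_t,t)\,dt$. Abbreviating $E_{t+1}\mathring{u}\circ\mathring{f}$ as $\mathring{u}\circ\mathring{f}$ gives the displayed ODE. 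Time is made continuous by interpolating the row position: replace $E_{t+1}$ by a smooth bump that moves across rows, so that $G$ is piecewise smooth in $t$.

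\textbf{Well-posedness and the main obstacle.} To call the trajectory ``a solution'', I would check the Picard--Lindel\"of hypotheses. $\mathring{f}$ is a composition of linear maps, attention softmaxes, LayerNorm and smooth activations, so it is locally Lipschitz. $\mathring{u}$, taken in its softmax or expected-embedding form, is smooth. Hence $G$ is locally Lipschitz in $X$ and piecewise continuous in $t$, and a unique solution exists. The main obstacle is that \cref{eq:ode-training-x} uses the \emph{discrete} argmax token. That makes $\mathring{u}$ piecewise constant, so the vector field is discontinuous across decision boundaries. A second gap is that Euler with unit step is only a model of the ODE, not an exact equivalence.

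I would handle the discontinuity by arguing that at convergence the hidden states lie in the interior of decision regions, at least up to the residual $\epsilon_t$ of \cref{eq:ode-training-h}. Then the argmax and its smooth surrogate coincide along the trajectory. The statement is accordingly claimed only as a modeling statement: the discrete teacher-forced sequence is the Euler skeleton of this ODE.
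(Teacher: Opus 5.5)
Your proposal is correct and takes the same route as the paper: \cref{sec:appendix-ode} likewise zero-pads $x_{\le t}$ into $\mathbb{R}^{T\times d_{\rm model}}$ and lifts the prediction with $v(\cdot,t)$, which places it at index $t+1$ exactly as your $E_{t+1}$ does, so that $\ominus$ becomes vector subtraction and the teacher-forced recursion is the unit-step difference equation $z_{t+1}-z_t=g(z_t,t)$, which the paper then reads as $dz_t=g(z_t,t)\,dt$. Your well-posedness discussion goes beyond the paper, which invokes Picard--Lindel\"of only on the condition that $\mathring{u}\circ\mathring{f}$ is continuous. If you keep that discussion, have $\mathring{f}$ read a row determined by $t$ rather than ``the last nonzero row'', since the latter jumps as rows fill in under your continuous interpolation. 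Also note that in the interior of the decision regions the argmax embedding map is itself locally constant, so you need no softmax surrogate there; the expected embedding only approximates it and does not coincide with it.
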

\Cref{prop:ode-training} models $x_{\le t}$ as following a ballistic trajectory in $\mathbb{R}^{T \times d_{\rm model}}$. The Picard-Lindelöf Theorem guarantees that if $\mathring{u}\circ\mathring{f}(\cdot)$ and its partial derivatives with respect to $x_{\le t}$ are continuous, the ODE admits a unique solution for a given initial condition. Consequently, within this ODE framework, sequences generated from distinct prompts (initial conditions) cannot intersect, theoretically ruling out mode collapse, and preserving diversity.

\subsection{Mode Collapse at Inference Time}
\label{sec:hallucination}

Let $h^\ast$ denote the optimal trajectory of hidden states, defined as:
\begin{equation}
h^\ast_t = h_t - \epsilon_t = \mathring{f}(x_{\le t})
\label{eq:h-star}
\end{equation}
If $\mathring{f}(\cdot)$ is Lipschitz-continuous~\cite{khalil_nonlinear_2002}, then the trajectory $h^\ast$ is also ballistic.

However, $\mathcal{L}_{\rm NTP}$ alone may not suffice to drive $\epsilon_t$ to zero. Recall that the goal of $\mathcal{L}_{\rm NTP}$ is to converge $u(h_t)$ to $x_{t+1}$. Since the hidden state $h_t$ is continuous while the token $x_{t+1}$ is discrete, the training process can be modeled as finding the correct Voronoi cell~\citep{Okabe00}, without stipulating the exact location within the cell. This flexibility is necessary for the Picard-Lindelöf Theorem to apply: as illustrated in~\cref{fig:voronoi}, it allows error-free geodesics ($h^\ast_t$) to traverse the same Voronoi cell at distinct locations, thereby avoiding intersection. Nevertheless, $h_t$ may drift onto an incorrect geodesic within the cell, leading to mode collapse.
\begin{figure}[h!]
    \centering
    \includegraphics[width=1.0\linewidth]{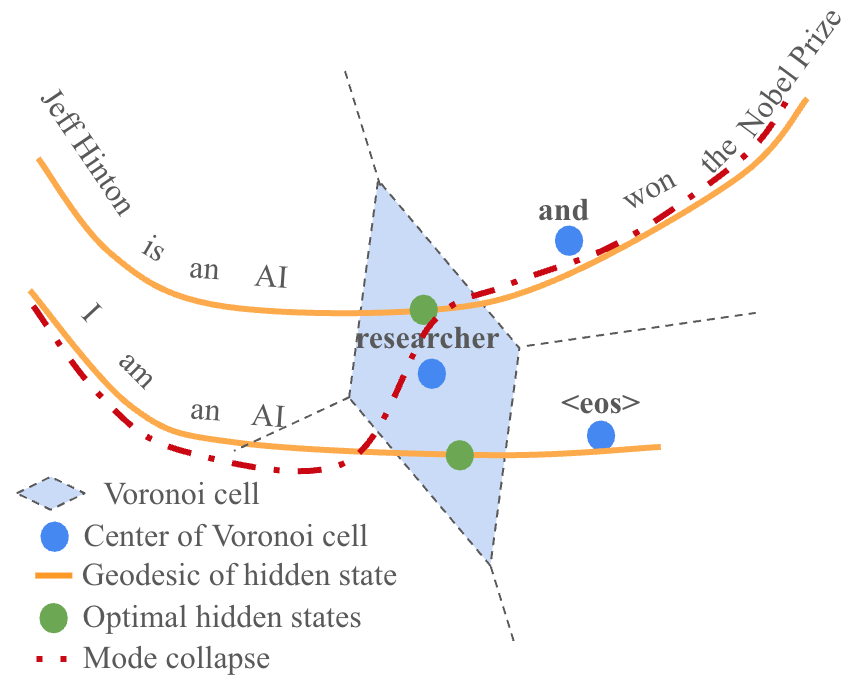}
    \caption{\small Two hidden state trajectories with similar prefixes pass through the Voronoi cell of the ``\textbf{researcher}'' token at different locations, leading to different next hidden states and hence different next tokens. Since $\mathcal{L}_{\rm NTP}$ cannot guarantee that $h_t$ converges to $h^\ast_t$ (optimal hidden state), $h_t$ can be misplaced on another geodesic. This leads to mode collapse (the red dotted line mistakenly continues the generation, misattributing Hinton's Nobel Prize to an arbitrary person, or if the error deviates in the opposite direction and precludes a winner).}
    \label{fig:voronoi}
\end{figure}

This analysis indicates that $\mathcal{L}_{\rm NTP}$ alone is insufficient for generation quality, strongly motivating an additional loss term ($\mathcal{L}_{\rm STP}$) to explicitly minimize $\epsilon_t$. It also implies that within the correct Voronoi cell, $\mathcal{L}_{\rm NTP}$ may plateau while $\mathcal{L}_{\rm STP}$ continuously decreases. Therefore, (P1).

In~\cref{sec:appendix-sde}, we demonstrate that in the infinite-width limit~\citep{yang2021tensor2b}, the inference process can be modeled as a Stochastic Differential Equation (SDE) with a Brownian motion term.

\section{Semantic Tube Prediction}

A key challenge in minimizing the error $\epsilon_t$ is that the optimal trajectory $h^\ast$ remains latent and unknown. To address this, we must postulate a structural property that allows us to estimate $h^\ast$, leading us to the Geodesic Hypothesis. In this section, we formalize this hypothesis and subsequently introduce Semantic Tube Prediction (STP).

\subsection{Semantic Tube}

If the Principle of Least Action holds, the trajectories of the token sequence $x_{\le t + 1}$ in~\cref{eq:ode-training-x} must be geodesics, which are locally linear almost everywhere. Since $h^\ast_t = \mathring{f}(x_{\le t})$, when $\mathring{f}(\cdot)$ is smooth enough, $h^\ast_t$ is also expected to be locally linear almost everywhere. Hence the Geodesic Hypothesis:



\textit{The trajectory of $x_{\le t} \in \mathbb{R}^{T\times d_{\rm model}}$ is locally linear almost everywhere. Similarly, the trajectory $h_t - \epsilon_t\in \mathbb{R}^d$ is locally linear almost everywhere.} 

We first formally define local linearity. Subsequently, we demonstrate that the Semantic Tube compresses the trajectory $h_t$ within a tube centered at $h^\ast_t$.

\begin{definition}[Local Linearity]
A time-indexed trajectory $h^\ast$ is defined as locally linear if $\exists \tau, \exists\varepsilon$ such that for any time indices $s < r < t$ satisfying $|t - s| \le \tau$, we have:
\begin{equation}
\| (h^{\ast}_r - h^{\ast}_s)_{\perp h^\ast_t - h^\ast_s}\|_2 \le \varepsilon
    \label{eq:local-linear}
\end{equation}
where $x_{\perp y}$ denotes the component of vector $x$ that is perpendicular to vector $y$.
\label{def:local-linear}
\end{definition}
\Cref{def:local-linear} captures the intuition that if a trajectory is locally linear, each local segment can be approximated by a straight line connecting its endpoints.

Next, we demonstrate that the Semantic Tube forces $h$ to approximate $h^\ast$.
\begin{lemma}[Straightening Lemma]
If $h_s = h^\ast_s$, $h_t = h^\ast_t$, and $\mathcal{L}_{\rm STP} \le \epsilon$ for all $r$ satisfying $s < r < t$, then
$$\|(h_r - h_s)_{\perp h^\ast_t - h^\ast_s} \|_2 \le \sqrt{2\epsilon}\| h_r - h_s\|_2.$$
\label{lemma:flat}    
\end{lemma}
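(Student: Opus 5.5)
The plan is a direct trigonometric argument: write the perpendicular component in terms of the angle between $h_r - h_s$ and the chord $h^\ast_t - h^\ast_s$, then bound that angle's sine by the STP loss. The hypotheses $h_s = h^\ast_s$ and $h_t = h^\ast_t$ are used only once, to replace $h^\ast_t - h^\ast_s$ by $h_t - h_s$. After that substitution the claim concerns only the observed hidden states.

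The definition of $\mathcal{L}_{\rm STP}$ has not appeared yet in the text, so I assume it is a cosine-type loss. The primary form I would work with is
$$\mathcal{L}_{\rm STP} = 1 - \cos\big(h_r - h_s,\; h_t - h_s\big),$$
with the convention that $r$ ranges over the interior indices $s<r<t$. Let $a = h_r - h_s$, $c = h_t - h_s$, and let $\theta\in[0,\pi]$ be the angle between them. Then $\|a_{\perp c}\|_2 = \|a\|_2 \sin\theta$. Using $1+\cos\theta \le 2$,
$$\sin^2\theta = (1-\cos\theta)(1+\cos\theta) \le 2(1-\cos\theta) \le 2\epsilon .$$
Taking square roots gives $\|(h_r-h_s)_{\perp h^\ast_t-h^\ast_s}\|_2 \le \sqrt{2\epsilon}\,\|h_r-h_s\|_2$. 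The degenerate case $a=0$ is trivial, since both sides vanish.

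If instead the loss compares consecutive increments, $\mathcal{L}_{\rm STP} = 1-\cos(h_r-h_s,\, h_t-h_r)$, an extra step is needed. Set $a = h_r-h_s$ and $b = h_t - h_r$, so that $c = a+b$. Let $\varphi$ be the angle between $a$ and $a+b$, and $\theta$ the angle between $a$ and $b$. Because $a+b$ lies in the planar cone spanned by $a$ and $b$, we have $\varphi \le \theta$. Then, using $\cos(\varphi/2)\le 1$ and the monotonicity of $\sin(x/2)$ on $[0,\pi]$,
$$\sin\varphi = 2\sin(\varphi/2)\cos(\varphi/2) \le 2\sin(\theta/2) = \sqrt{2(1-\cos\theta)} \le \sqrt{2\epsilon}.$$
The conclusion follows exactly as before.

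The main obstacle is therefore not the computation but matching the exact form of $\mathcal{L}_{\rm STP}$ that the paper adopts. In the increment version the key fact is the cone inequality $\varphi\le\theta$. I would justify it by working in the plane spanned by $a$ and $b$, where $a+b$ is a nonnegative combination of the two vectors. The half-angle identity $2\sin(\theta/2)=\sqrt{2(1-\cos\theta)}$ handles obtuse $\theta$ uniformly, so no case split on the sign of $\cos\theta$ is needed.
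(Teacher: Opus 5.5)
Your increment-form case is the paper's own proof, and since the paper defines $\mathcal{L}_{\rm STP}=1-\cos(h_t-h_r,h_r-h_s)$ the chord-form branch is not needed. The paper likewise writes the perpendicular part as $\|h_r-h_s\|_2$ times the sine of the angle to the chord (your $\varphi$), uses the same cone inequality $\varphi\le\theta$, and bounds through the loss. The difference is the last step: the paper uses the approximations $\cos\theta\approx 1-\theta^2/2$ and $\sin\theta\approx\theta$, so it only obtains $\lesssim$, and its step from $\varphi\le\theta$ to $\sin\varphi\le\sin\theta$ implicitly assumes $\theta\le\pi/2$. Your half-angle bound $\sin\varphi\le 2\sin(\theta/2)=\sqrt{2(1-\cos\theta)}$ instead proves the stated inequality exactly, for every $\epsilon$.
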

Proof is deferred to~\cref{sec:tube-lemma}. 

Let $\| h_r - h^\ast\|_2 = \min_{r'}\| h_r - h^\ast_{r'}\|_2$ denote the minimum distance from $h_r$ to the trajectory $h^\ast$. We establish the following theorem:

\begin{theorem}[Semantic Tube]
If $h^\ast$ is locally linear and for all $r$ satisfying $0 \le s < r < t \le \tau$, $\mathcal{L}_{\rm STP} \rightarrow 0$, then
$$\| h_r - h^\ast\|_2 \lesssim \varepsilon$$
\label{theorem:tube}
\end{theorem}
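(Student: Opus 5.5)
The argument combines the Straightening Lemma (\cref{lemma:flat}) with the local linearity of $h^\ast$ (\cref{def:local-linear}) through a triangle inequality in the plane transverse to the chord $h^\ast_t - h^\ast_s$. Throughout I adopt the same anchoring convention as the lemma: the endpoints of the window satisfy $h_s = h^\ast_s$ and $h_t = h^\ast_t$. In practice one can take $s=0$ and $t=\tau$, so every $r$ in the window has $|t-s|\le\tau$. Write $v = h^\ast_t - h^\ast_s$ for the chord and $P_\perp$ for the orthogonal projection onto $v^\perp$.

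\textbf{Step 1: the noise term for $h_r$ is small.} Since $\mathcal{L}_{\rm STP}\to 0$ for every $r$ with $s<r<t$, \cref{lemma:flat} applies with any $\epsilon>0$. It gives
\[
\|P_\perp(h_r - h_s)\|_2 \le \sqrt{2\epsilon}\,\|h_r - h_s\|_2 .
\]
To make this vanish in the limit, I need $\|h_r - h_s\|_2$ to stay bounded. I would assume this, for instance because hidden states lie in a bounded set (layer norm) or because the window has finite diameter $D$. Under that assumption the right-hand side is at most $\sqrt{2\epsilon}\,D \to 0$.

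\textbf{Step 2: the optimal trajectory sits close to the chord.} Local linearity gives $\|P_\perp(h^\ast_{r'} - h^\ast_s)\|_2 \le \varepsilon$ for every $r'$ between $s$ and $t$. So the segment of $h^\ast$ over $[s,t]$ lies in an $\varepsilon$-tube around the line $h^\ast_s + \mathbb{R}v$.

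\textbf{Step 3: match $h_r$ to a point of $h^\ast$ along the chord.} Decompose
\[
h_r - h_s = \alpha v + P_\perp(h_r-h_s), \qquad \alpha\in\mathbb{R}.
\]
I then choose $r'$ so that the parallel coordinate of $h^\ast_{r'} - h^\ast_s$ along $v$ is as close as possible to $\alpha\|v\|$. If that parallel coordinate sweeps continuously from $0$ to $\|v\|$, and $\alpha\in[0,1]$, the parallel components can be matched exactly. The triangle inequality then gives
\[
\|h_r - h^\ast_{r'}\|_2 \le \|P_\perp(h_r-h_s)\|_2 + \|P_\perp(h^\ast_{r'}-h^\ast_s)\|_2 \le \sqrt{2\epsilon}\,D + \varepsilon .
\]
Taking the minimum over $r'$ and letting $\epsilon\to0$ yields $\|h_r - h^\ast\|_2 \lesssim \varepsilon$.

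\textbf{Main obstacle.} The hard part is Step 3, and specifically the parallel direction. Nothing so far forces $\alpha\in[0,1]$: $h_r$ could lie on the line but overshoot the endpoints. Nor is the parallel coordinate of $h^\ast$ guaranteed to be monotone or to vary continuously in a discrete time index. My first attempt would be to treat $\lesssim$ as permitting an additive discretisation term, namely the maximal step $\max\|h^\ast_{r'+1}-h^\ast_{r'}\|$, which is controlled by the Lipschitz continuity of $\mathring{f}$ noted in \cref{sec:hallucination}. I would then argue that $\alpha\in[0,1]$ holds up to $O(\varepsilon)$ by applying \cref{lemma:flat} to sub-windows $(s,r)$ and $(r,t)$. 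This would show that $h_r$ cannot lie beyond $h_t$ along $v$ without producing a large perpendicular component relative to one of those sub-chords. If that fails, I would state the conclusion as the distance to the line through the chord, which is what the tube picture of \cref{fig:semantic-tube} actually depicts.
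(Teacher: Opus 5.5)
Your skeleton matches the paper's sketch: anchor $h_s=h^\ast_s$ and $h_t=h^\ast_t$, use \cref{lemma:flat} to kill the component of $h_r-h_s$ transverse to the chord, use \cref{def:local-linear} to place $h^\ast$ in an $\varepsilon$-tube about the chord, and close with the triangle inequality. The paper controls $\|h_r-h_s\|_2$ by $\|h^\ast_r-h^\ast_s\|_2+\epsilon_r$ with $h^\ast$ fixed, which tacitly needs $\epsilon_r$ bounded, so it is no better than your $D$. It then simply asserts the final step, so the paper does not address your Step 3 obstacle either.

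Half of that obstacle is removed by the STP loss itself. Your sub-window plan would not work: applying \cref{lemma:flat} on $(s,r)$ or $(r,t)$ requires $h_r=h^\ast_r$, which is what you are trying to prove. Instead, note that the loss measures the turning angle $\theta'$ at the vertex $h_r$ of the triangle $h_s h_r h_t$. Hence the two base angles sum to $\theta'$; this is the paper's $\theta\le\theta'$. Once $\mathcal{L}_{\rm STP}<1$, i.e.\ $\theta'<\pi/2$, both base angles are acute. The foot of the perpendicular from $h_r$ then lies on the chord, so $\alpha\in[0,1]$ exactly. Moreover, $\|h_r-h_s\|_2\cos\theta=\alpha\|v\|_2\le\|v\|_2$ gives $\|P_\perp(h_r-h_s)\|_2\le\|v\|_2\tan\theta'\to0$. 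So the boundedness assumption in Step 1 is unnecessary too.

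The other half is a genuine gap, and it lies in the statement, not just in your write-up. \cref{def:local-linear} constrains only deviations \emph{perpendicular} to the chord. With discrete indices, nothing forces the parallel coordinates $p(r')=\langle h^\ast_{r'}-h^\ast_s,v\rangle/\|v\|_2$ to fill $[0,\|v\|_2]$. For example, take $\tau=2$, any $v\neq0$, $h^\ast_0=0$, $h^\ast_1=0.05v$, $h^\ast_2=v$, and $h_0=0$, $h_1=0.5v$, $h_2=v$. All points are collinear and ordered, so $\mathcal{L}_{\rm STP}=0$ and local linearity holds with $\varepsilon=0$, yet $\|h_1-h^\ast\|_2=0.45\|v\|_2$.

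Your Lipschitz patch does not close this. It bounds $\|h^\ast_{r'+1}-h^\ast_{r'}\|_2$ only by $L\|x_{r'+1}\|_2$, an embedding-scale constant rather than anything $\lesssim\varepsilon$.

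The repair consistent with the paper is its continuous-time ODE picture. Suppose $r'\mapsto h^\ast_{r'}$ is continuous on $[s,t]$, as it is along the ODE solution when $\mathring{f}$ is Lipschitz. Then $p$ is continuous with $p(s)=0$ and $p(t)=\|v\|_2$. The intermediate value theorem gives $r'$ with $p(r')=\alpha\|v\|_2$; monotonicity is not needed. Your Step 3 inequality then becomes $\|h_r-h^\ast\|_2\le\|v\|_2\tan\theta'+\varepsilon\to\varepsilon$. In genuinely discrete time, the correct conclusion is $\varepsilon$ plus half the largest gap between consecutive values of $p$, or, as you suggest, the distance to the chord line.
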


\begin{proof}[Proof Sketch]
Only prove for the case $h_s = h^\ast_s$ and $h_t = h^\ast_t$. In this scenario, $\| h_r - h_s\|_2 = \|h_r - h^\ast_s \|$. Applying the triangle inequality yields $\|h_r - h^\ast_s \| \le \| h^\ast_r - h^\ast_s\|_2 + \epsilon_r$. Notice $h_r^\ast$ and $h^\ast_s$ are fixed, by~\cref{lemma:flat}, $\|(h_r - h_s)_{\perp h^\ast_t - h^\ast_s} \|_2 \rightarrow 0$. By~\cref{def:local-linear} and the triangle inequality, it follows that $\| h_r - h^\ast\|_2 \lesssim \varepsilon$
\end{proof}

In LLMs, it is standard to assume all sequences begin with \texttt{<bos>} and end with \texttt{<eos>}; thus, it is reasonable to assume the boundary conditions $h_0 = h^\ast_0$ and $h_\tau = h^\ast_\tau$. This is formally proven in~\cref{sec:tube-theorem}, which completes the proof of~\cref{theorem:tube}.

In practice, the indices $s < r < t$ are selected randomly. Consequently, minimizing $\mathcal{L}_{\rm STP}$ effectively drives $\mathbb{E}[1-\cos(h_t - h_r, h_r - h_s)] \rightarrow 0$. By Markov's inequality, for any $\epsilon$, $P(1-\cos(h_t - h_r, h_r - h_s) > \epsilon) \rightarrow 0$. This leads to the following corollary:

\begin{corollary}[Random Tube]
For randomly selected $s < r < t$, if $\mathcal{L}_{\rm STP} \rightarrow 0$, then for any $\epsilon$, 
\begin{equation*}
    P(\| h_r - h^\ast\|_2 > \varepsilon + \epsilon) \rightarrow 0
\end{equation*}
    \label{corollary:random}
\end{corollary}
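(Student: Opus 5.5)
The plan is to derive the corollary from \cref{theorem:tube} and \cref{lemma:flat} using Markov's inequality, as the paragraph before the statement suggests. We read $\mathcal{L}_{\rm STP}$ as the expectation over random triples $s<r<t$ of the per-triple loss $\ell(s,r,t) = 1-\cos(h_t - h_r, h_r - h_s)$. The claim is then a probabilistic version of the deterministic theorem: whenever $\ell(s,r,t)$ is small, the per-triple argument gives $\|h_r - h^\ast\|_2 \lesssim \varepsilon + \epsilon$. Markov's inequality makes the exceptional triples asymptotically negligible.

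\textbf{Step 1 (Markov).} Since $\ell \ge 0$, for every threshold $\delta>0$ we have $P(\ell(s,r,t) > \delta) \le \mathcal{L}_{\rm STP}/\delta$. This tends to $0$ as $\mathcal{L}_{\rm STP}\to 0$.

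\textbf{Step 2 (per-triple deterministic bound).} On the good event $\{\ell(s,r,t)\le\delta\}$ we rerun the argument of \cref{lemma:flat} and \cref{theorem:tube} for this single triple. We work under the same boundary conditions $h_s=h^\ast_s$ and $h_t=h^\ast_t$ that the paper justifies via \texttt{<bos>}/\texttt{<eos>}.

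\begin{itemize}
\item Small $1-\cos$ between $h_t-h_r$ and $h_r-h_s$ forces $h_r-h_s$ to be nearly parallel to $h_t-h_s = h^\ast_t-h^\ast_s$. The lemma then gives $\|(h_r-h_s)_{\perp h^\ast_t-h^\ast_s}\|_2 \le \sqrt{2\delta}\,\|h_r-h_s\|_2$.
\item Bound $\|h_r-h_s\|_2 \le \|h^\ast_r-h^\ast_s\|_2+\|\epsilon_r\|_2$ by the triangle inequality. On the window $|t-s|\le\tau$ this is bounded by some constant $C$, by the Lipschitz continuity of $\mathring f$ assumed earlier, so the perpendicular deviation of $h_r$ from the chord is at most $C\sqrt{2\delta}$.
\item By \cref{def:local-linear}, $h^\ast_r$ itself lies within $\varepsilon$ of the chord through $h^\ast_s, h^\ast_t$.
\item The parallel component of $h_r-h_s$ locates $h_r$ along the chord. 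Intermediate-value continuity of the trajectory $h^\ast$ lets us pick $r'$ with $h^\ast_{r'}$ having the same parallel coordinate.
\item The triangle inequality then yields $\|h_r - h^\ast\|_2 \le \|h_r-h^\ast_{r'}\|_2 \le \varepsilon + C\sqrt{2\delta}$, up to the same $\lesssim$ slack used in the theorem.
\end{itemize}

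\textbf{Step 3 (choose $\delta$).} Given $\epsilon>0$, choose $\delta$ with $C\sqrt{2\delta}\le\epsilon$. Then $\{\|h_r-h^\ast\|_2>\varepsilon+\epsilon\}\subseteq\{\ell>\delta\}$, and Step 1 gives $P(\|h_r-h^\ast\|_2>\varepsilon+\epsilon)\le \mathcal{L}_{\rm STP}/\delta\to 0$.

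\textbf{Main obstacle.} The hard part is Step 2: the paper's statements are only approximate ($\lesssim$), so a per-triple quantitative version with explicit constants is needed. Two points need care.

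\begin{itemize}
\item Converting the cosine condition between consecutive differences into the lemma's perpendicular bound relative to $h^\ast_t-h^\ast_s$. I would first try the elementary estimate that $1-\cos\theta\le\delta$ implies $\sin\theta\le\sqrt{2\delta}$.
\item Obtaining a uniform bound $C$ on $\|h_r-h_s\|_2$ that does not depend on the random triple. I would handle this via the finite window $\tau$ and boundedness of $\epsilon_r$.
\end{itemize}

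If the boundary conditions only hold at $0$ and $\tau$, I would instead apply the bound with $s=0$, $t=\tau$, as in \cref{sec:tube-theorem}.
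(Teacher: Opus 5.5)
Your proposal is correct and follows the paper's own argument: Markov's inequality on the nonnegative per-triple loss $1-\cos(h_t-h_r,h_r-h_s)$, then the Straightening Lemma and Semantic Tube Theorem on the good event. It is more careful than the paper, because choosing $\delta$ with $C\sqrt{2\delta}\le\epsilon$ properly converts the cosine threshold into the distance threshold, which the paper glosses over by reusing the same $\epsilon$ for both.

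Two small remarks. First, the uniform constant $C$ needs no bound on $\epsilon_r$: once $\theta'<\pi/2$, the projection of $h_r$ falls on the chord, so $\|h_r-h_s\|_2\cos\theta'\le\|h^\ast_t-h^\ast_s\|_2$. Second, exactly as in the paper, the anchoring $h_s=h^\ast_s$, $h_t=h^\ast_t$ is only justified at the sequence ends, so your fallback to $s=0$, $t=\tau$ still needs the sampling law to give those triples positive probability, since Markov controls the loss of the sampled triple rather than of $(0,r,\tau)$.
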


\Cref{corollary:random} implies that if $\mathcal{L}_{\rm STP} \rightarrow 0$ for a given sequence, then with high probability, the trajectory of the sequence's hidden states is confined within a tube centered around the optimal trajectory $h^\ast$.

However, at inference time, the Brownian motion term diverges into a cone whose radius scales as $\propto \sigma_t\sqrt{t}$, see~\cref{sec:appendix-inference-cone} for details.



\subsection{Practical Considerations}

Since the forward pass naturally computes $h_s$, $h_r$, and $h_t$, the STP loss introduces negligible computational overhead—primarily the cost of computing cosine similarity. This is significantly more efficient than the fractional extra forward passes required by LLM-JEPA~\citep{huang2025llm}. Furthermore, because indices $s$, $r$, and $t$ can be selected randomly, STP eliminates the need for manual scaffolding of a two-view structure. In summary, STP effectively addresses the two primary limitations that have hindered the broader adoption of LLM-JEPA. Additionally, STP avoids the complexity of a predictor network (often a requirement in LLM-JEPA), as local linearity implies an identity predictor. Like LLM-JEPA, the STP loss is applied exclusively during training and is not required at inference time.

Further implementation details are provided in~\cref{sec:details}.

\subsection{Related Work}

Our approach addresses the classic \textbf{Exposure Bias} problem~\citep{bengio2015scheduled}, originally identified in recurrent neural networks (RNNs)~\citep{elman1990finding,siegleman1995computational}. The problem arises because the model is trained with \textbf{Teacher Forcing}~\citep{williams1989learning}---conditioning on the ground-truth history---but must rely on its own potentially drifting predictions during inference. Although Maximum Likelihood Estimation ($\mathcal{L}_{\rm NTP}$ in the case of LLMs) is empirically effective,~\citet{huszar2015not} argues that it optimizes an objective different from generation quality, motivating our combined loss $\mathcal{L}_{\rm NTP} + \mathcal{L}_{\rm STP}$.

\textbf{JEPAs}~\citep{assran2023self,baevski2022data2vec} learn predictive representations across views, offering theoretical benefits~\cite{littwin2024jepa} despite the risk of dimensional collapse~\cite{jing2021understanding,kenneweg2025jepa}. While recent works extend these objectives to LLMs~\cite{barrault2024large,wang2025reversal}, LLM-JEPA~\citep{huang2025llm} is bottlenecked by manual two-view scaffolding and the computational cost of additional forward passes, neither is a problem for $\mathcal{L}_{\rm STP}$.

Our framework extends the philosophy of \textbf{Energy-Based Models} (EBMs)~\citep{lecun2006tutorial}, which learn to assign low energy to compatible configuration of variables. While EBMs and recent architectures like JEPA~\citep{lecun2022path} typically minimize energy at specific states, our approach invokes the Principle of Least Action to minimize the action---the integral of the Lagrangian along the generation trajectory. By enforcing geodesic constraints via $\mathcal{L}_{\rm STP}$, we generalize state-wise (or local) energy minimization to trajectory-wise action minimization, ensuring the generation follows the path of least resistance.

\textbf{Scaling Laws}~govern the power-law relationship between compute, data, and parameters in both pre-training~\citep{kaplan2020scaling,hoffmann2022training} and fine-tuning~\citep{zhang2024scaling}. While recent data efficiency research emphasizes identifying high-density subsets~\citep{sorscher2022beyond} or synthetic curation~\citep{gunasekar2023textbooks,muennighoff2023scaling}, $\mathcal{L}_{\rm STP}$ enhances the training SNR directly, obviating the need for explicit data subset selection.

\textbf{SDE/ODE Perspective}: \citet{kong2020sde} interpreted ResNets as ``Neural SDEs'' which has a Brownian motion term. While \citet{tong2025neural} recently adapted ODEs for LLMs, they model evolution across network depth (layers). Our work takes an orthogonal approach, focusing instead on the temporal dynamics of hidden states across the token sequence.

\textbf{The Linear Representation Hypothesis} (LRH)~\citep{park2024linear,park2025the} posits that simple concepts are encoded as directions in the representation space, whereas the Geodesic Hypothesis suggests that both simple and composed concepts (expressed as token sequences) follow locally linear trajectories. Consequently, the vector arithmetic observed in LRH ($\vec{v}_{Paris} - \vec{v}_{France} + \vec{v}_{Italy} \approx \vec{v}_{Rome}$) emerges naturally from path linearity ($\vec{v}_{Paris}, \vec{v}_{to}, \vec{v}_{France}, \vec{v}_{is}, \vec{v}_{Rome}, \vec{v}_{to}, \vec{v}_{Italy}$ aligns on almost a straight line, see~\cref{fig:lhr}).

\begin{figure}[h!]
    \centering
    \includegraphics[width=0.8\linewidth]{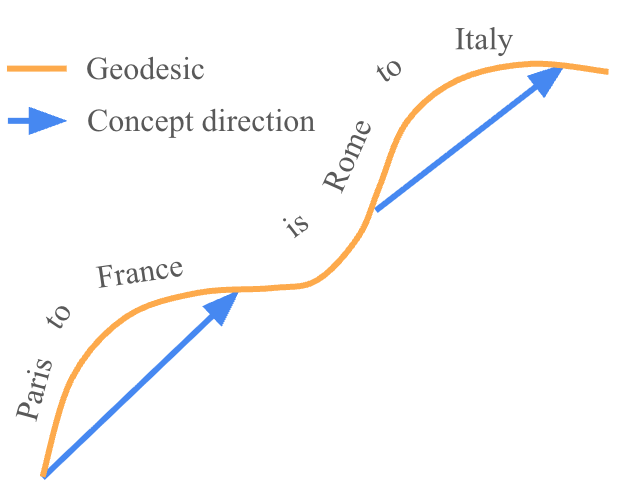}
    \caption{\small When the sentence aligns on a geodesic, the concept direction naturally aligns.}
    \label{fig:lhr}
\end{figure}

\textbf{The Manifold Hypothesis}~\citep{kiani2024hardness,robinson2025token,whiteley2025statistical} posits that learned representations form a simple and smooth manifold. Under the Geodesic Hypothesis, this structure is a natural consequence of the Principle of Least Action.

\textbf{The Curvature Straightening Phenomenon}~\citep{hosseini2023large,henaff2021} observes that the training process tends to straighten the curvature between consecutive tokens. We interpret this as a manifestation of the underlying geodesic, which approximates a straight line.

The \textbf{Neural Tangent Kernel (NTK)}~simplifies infinite-width dynamics~\citep{jacot2018neural}, a framework generalized to Transformers~\citep{hron2020infinite,yang2021tensor2b} and compatible feature learning regimes~\citep{yang2021tensor5}. While \citet{seleznova2022neural} note the importance of the depth-to-width ratio, modern LLMs typically operate in the requisite width $\gg$ depth regime.

The application of \textbf{geodesic geometry to LLMs} remains underexplored, with existing studies primarily restricted to interpolating representations across models~\citep{deng2025chipalign,yu2024connecting}.

\section{Experiments}

We conduct extensive experiments to show the performance of Semantic Tube across models, datasets, and model sizes. We also show that accuracy barely budges when the training dataset is halved. Both accuracy and data efficiency are solid evidence that Semantic Tube improves SNR. We ablate on various setups, including LLM-JEPA style explicit two-views and curvature straightening. Lastly we show  how to tune $\lambda$ in practices.

Implementing $\mathcal{L}_{\rm STP}$ is straightforward with HuggingFace \texttt{transformers}. When computing loss, we grab per-token \texttt{hidden\_state} $h$ from last layer, pick (random) indices $s < r < t$, and compute $1 - \cos(h_t - h_r, h_r - h_s)$. Across all experiments, we follow LLM-JEPA~\citep{huang2025llm} to pick 5 random seeds: 82, 23, 37, 84, and 4, and report both mean accuracy and standard deviation. This also allows us to report $p$-value of paired, single-tailed $t$-Test. We inherit optimal number of epochs and learning rate from LLM-JEPA. $\lambda$ is separately tuned.

\subsection{Loss Landscape}
\label{sec:loss-landscape}

\begin{figure}[h!]
    \centering
    \begin{subfigure}{1.0\linewidth}
    \centering
    \includegraphics[width=1.0\linewidth]{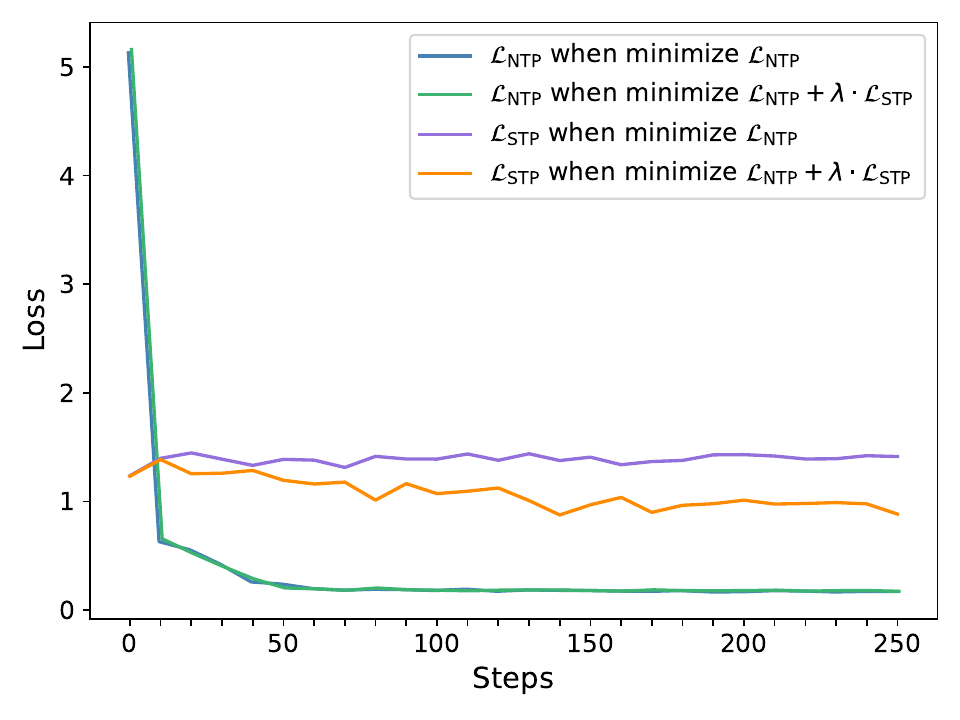}
    \caption{\small Loss curve}
    \end{subfigure}
    \begin{subfigure}{1.0\linewidth}
    \centering
    \includegraphics[width=1.0\linewidth]{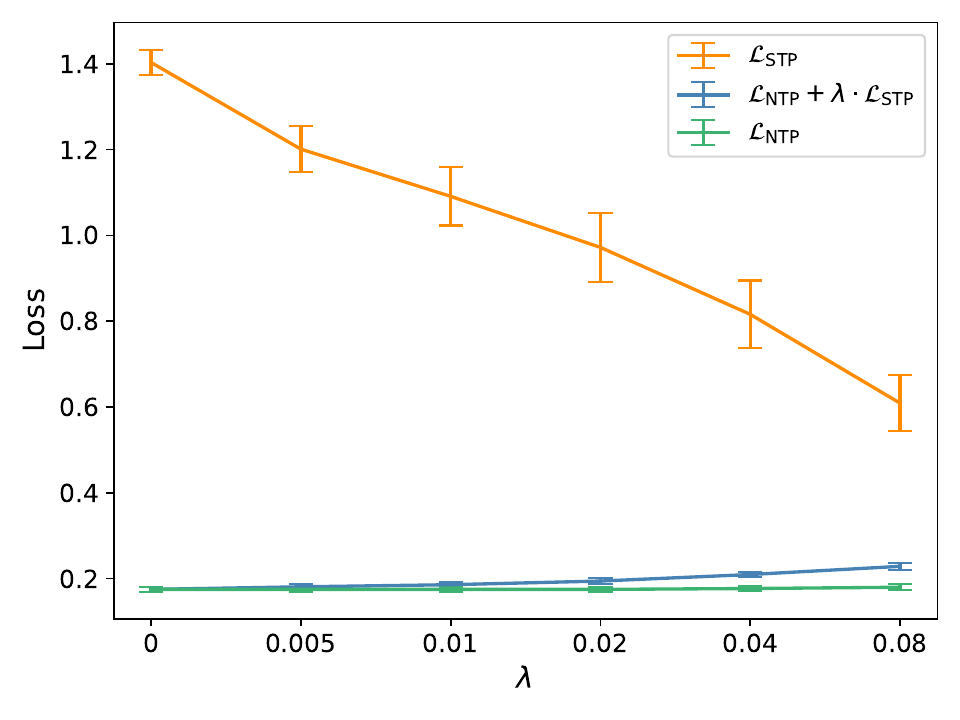}
    \caption{\small Loss vs. $\lambda$}
    \end{subfigure}
    \caption{\small Loss landscape. \textbf{(a)} When $\mathcal{L}_{\rm NTP}$ plateaus, $\mathcal{L}_{\rm STP}$ continues to decrease. Furthermore, minimizing $\mathcal{L}_{\rm NTP}$ does not automatically minimize $\mathcal{L}_{\rm STP}$. \textbf{(b)} Across a wide range of $\lambda$, increasing $\lambda$ on a logarithmic scale reduces $\mathcal{L}_{\rm STP}$ linearly, while $\mathcal{L}_{\rm NTP}$ remains unchanged.}
    \label{fig:loss-lambda}
\end{figure}

We begin by analyzing the loss landscape by fine-tuning Llama-3.2-1B-Instruct~\citep{llama3} on the NL-RX-SYNTH~\citep{locascio2016neural} dataset.

\Cref{fig:loss-lambda}(a) demonstrates that in regular fine-tuning, minimizing $\mathcal{L}_{\rm NTP}$ does not automatically minimize $\mathcal{L}_{\rm STP}$. With the Semantic Tube, however, $\mathcal{L}_{\rm STP}$ continues to decrease even after $\mathcal{L}_{\rm NTP}$ plateaus, corroborating (P1). Moreover, while $\mathcal{L}_{\rm NTP}$ remains comparable between regular and Semantic Tube fine-tuning, there is a significant gap in $\mathcal{L}_{\rm STP}$. This confirms that the SNR gain is driven by $\mathcal{L}_{\rm STP}$, validating the analysis in~\cref{sec:hallucination} that $\mathcal{L}_{\rm NTP}$ alone is insufficient for generation quality and that $\mathcal{L}_{\rm STP}$ acts as a necessary complement.

\Cref{fig:loss-lambda}(b) illustrates that increasing $\lambda$ on a logarithmic scale reduces $\mathcal{L}_{\rm STP}$ linearly across a wide range, while $\mathcal{L}_{\rm NTP}$ remains stable. Given $\mathcal{L}_{\rm STP} = 1 - \cos(h_t - h_r, h_r - h_s)$, a value of $\mathcal{L}_{\rm STP} > 1.0$ implies that the trajectory vector $h_t - h_r$ diverges significantly (essentially reversing direction) relative to $h_r - h_s$. At $\lambda=0$ (regular fine-tuning), $\mathcal{L}_{\rm STP} \approx 1.4$ indicates a trajectory resembling erratic Brownian motion. At $\lambda=0.08$, $\mathcal{L}_{\rm STP}$ drops to $0.6$, reflecting a substantially smoother path. Notably, while the optimal performance is achieved at $\lambda=0.02$ (\cref{tab:lambda}), the accuracy at $\lambda=0.08$ is only marginally lower (\cref{fig:lambda}).

\subsection{Better Accuracy}
\label{sec:accuracy}

\textbf{On Various Datasets}:~We first fine-tune Llama-3.2-1B-Instruct to demonstrate that Semantic Tube yields significant accuracy improvements over regular fine-tuning and LLM-JEPA across diverse datasets: NL-RX-SYNTH, NL-RX-TURK \citep{locascio2016neural}, GSM8K \citep{cobbe2021training}, Spider \citep{yu2018spider}, NQ-Open~\citep{lee-etal-2019-latent-no-open}, and HellaSwag~\citep{zellers2019hellaswag}. \Cref{fig:accuracy}(a) illustrates the superior performance of Semantic Tube compared to regular fine-tuning and LLM-JEPA.

\begin{figure}[h!]
    \centering
    \begin{subfigure}{1.0\linewidth}
    \centering
    \includegraphics[width=1.0\linewidth]{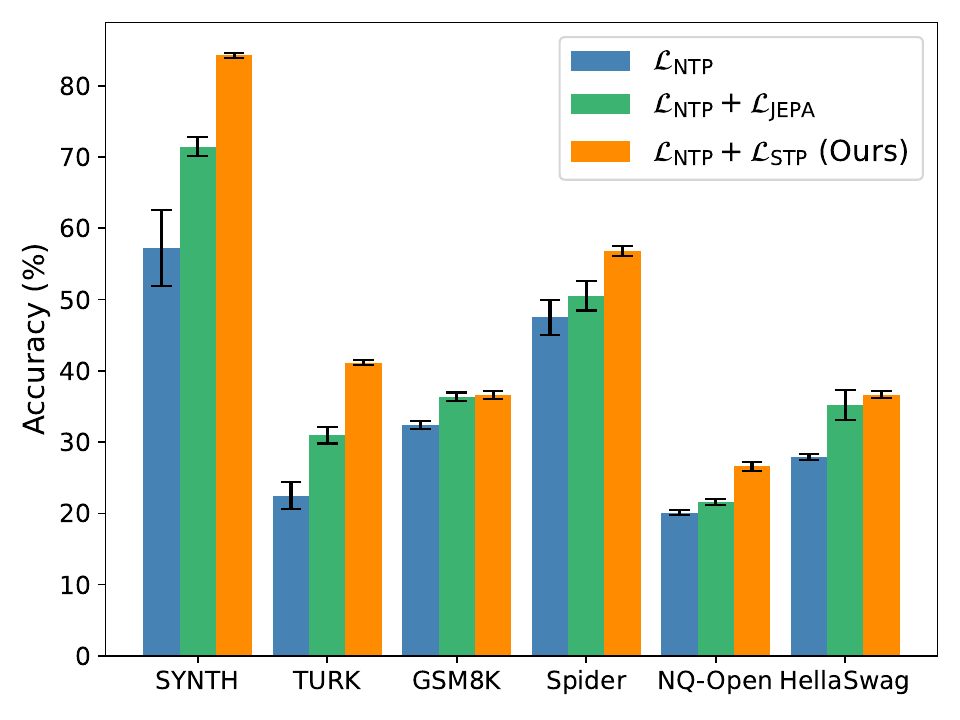}
    \caption{\small Datasets}
    \end{subfigure}
    \begin{subfigure}{1.0\linewidth}
    \centering
    \includegraphics[width=1.0\linewidth]{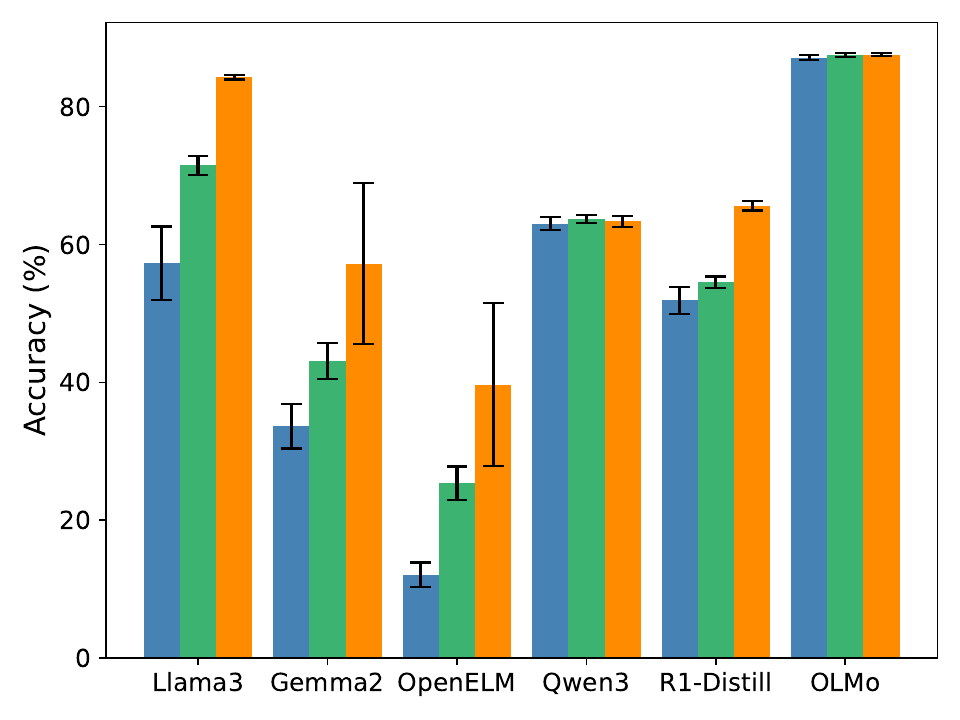}
    \caption{\small Model families}
    \end{subfigure}
    \begin{subfigure}{1.0\linewidth}
    \centering
    \includegraphics[width=1.0\linewidth]{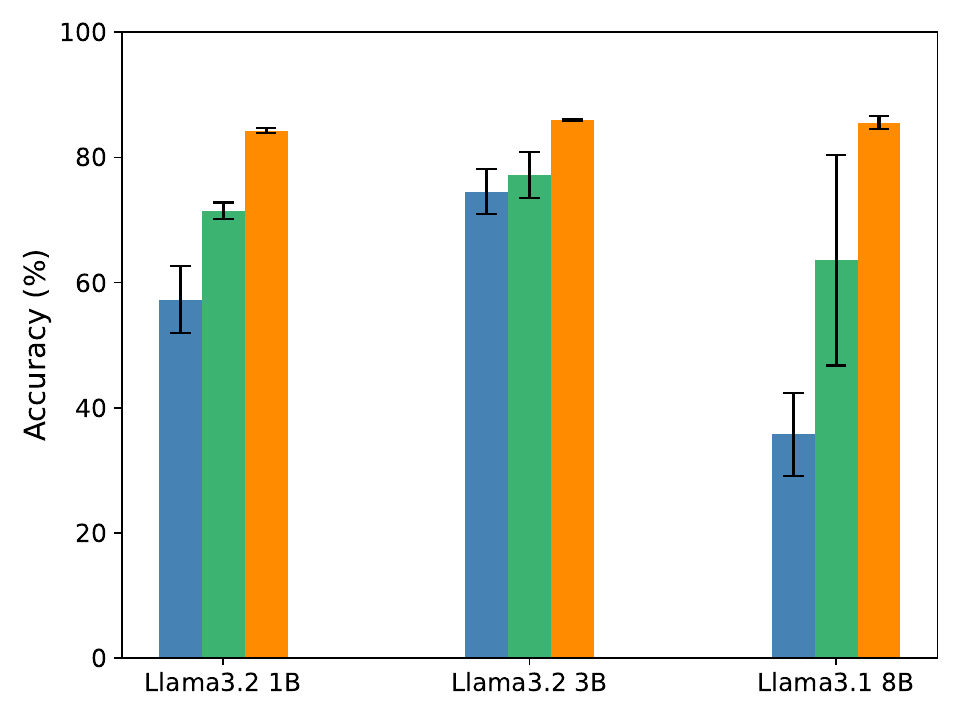}
    \caption{\small Model sizes}
    \end{subfigure}
    \caption{\small Semantic Tube ($\mathcal{L}_{\rm NTP} + \mathcal{L}_{\rm STP}$, our approach) demonstrates superior performance across \textbf{(a)} datasets, \textbf{(b)} model families, and \textbf{(c)} model sizes compared to regular fine-tuning ($\mathcal{L}_{\rm NTP}$) and LLM-JEPA ($\mathcal{L}_{\rm NTP} + \mathcal{L}_{\rm JEPA}$).}
    \label{fig:accuracy}
\end{figure}

\textbf{On Various Model Families}:~Next, we extend our evaluation to various model families. In addition to Llama, we evaluate gemma-2-2b-it \citep{team2024gemma}, OpenELM-1\_1B-Instruct \citep{mehta2024openelm}, and OLMo-2-0425-1B-Instruct \citep{olmo20242} on NL-RX-SYNTH, as well as Qwen3-1.7B~\citep{yang2025qwen3technicalreport} and DeepSeek-R1-Distill-Qwen-1.5B~\citep{deepseekai2025deepseekr1incentivizingreasoningcapability} on GSM8K. The results are presented in~\cref{fig:accuracy}(b).

\textbf{On Various Model Sizes}:~Finally, we examine scalability across model sizes using Llama-3 1B, 3B, and 8B models. Results are shown in~\cref{fig:accuracy}(c).

\subsection{Data Efficiency}
\label{sec:data-efficiency}

Data efficiency is another crucial metric demonstrating improved SNR. We randomly select subsets of $\frac{1}{2}$, $\frac{1}{4}$, $\frac{1}{8}$, $\frac{1}{16}$, and $\frac{1}{32}$ of the NL-RX-SYNTH dataset and perform both Semantic Tube and regular fine-tuning on Llama-3 1B, 3B, and 8B models. To compensate for the reduced number of training steps, we scale the epochs proportionally: with a $\frac{1}{n}$ dataset fraction, we run $n \times$ epochs. For Semantic Tube, accuracy shows a negligible drop when the training dataset is halved and remains robust until the dataset is reduced to $\frac{1}{16}$, at which point it matches the accuracy of regular fine-tuning on the full dataset. In contrast, regular fine-tuning suffers a significant drop immediately when the dataset is halved. See~\cref{fig:less-data} for 1B results and~\cref{fig:less-data-3b-8b} for 3B and 8B results.

We also experimented with half compute ($\frac{n}{2} \times$ epochs) combined with a $2\times$ learning rate. In both full and half compute scenarios, we also tested $2\times\lambda$. Interestingly, although the half-compute, double-learning-rate setting does not yield optimal accuracy at $\frac{1}{2}$ or full training data, it performs comparatively better when the dataset fraction is $<\frac{1}{2}$.

The improved accuracy and data efficiency provide strong evidence that Semantic Tube improves SNR (see~\cref{sec:snr-proofs} for formal proofs linking SNR to accuracy and data efficiency). This validates (P2) and supports the proposed noise/signal decomposition in~\cref{fig:semantic-tube}, where the component perpendicular to the tube represents noise. Consequently, it supports the hypothesis that the geodesic is locally linear; otherwise, it could not be effectively approximated by the tube.

\subsection{Preserving Diversity}

In this section, we demonstrate that Semantic Tube preserves diversity. In the NL-RX-SYNTH dataset, some regular expressions end with ``\texttt{.*}'', while others end with ``\texttt{.*.*}''. Although functionally equivalent, these variations represent a nuanced preference by the dataset creator; a robust neural network should be able to learn and preserve this diversity. As shown in~\cref{tab:dotstar}, we find that regular fine-tuning struggles to learn either pattern effectively. LLM-JEPA learns the former pattern well but fails on the latter, likely because the former dominates the training set by a factor of $35\times$. In contrast, Semantic Tube successfully learns both patterns. We list representative samples from the SYNTH dataset ending with either ``\texttt{.*}'' or ``\texttt{.*.*}'' in~\cref{tab:dotstar-samples}.

\begin{table}[h!]
    \centering
    \caption{\small Accuracy on functionally equivalent regular expression suffixes ``\texttt{.*}'' and ``\texttt{.*.*}''. Semantic Tube effectively captures nuanced preferences, whereas LLM-JEPA exhibits mode collapse by biasing towards ``\texttt{.*}'', which is 35$\times$ more prevalent in the training set than ``\texttt{.*.*}''.}
    \begin{tabular}{c|c|c|c}
    \toprule
        {\small \textbf{Suffix}} &  {\small \textbf{Semantic Tube}} & {\small \textbf{Regular}} & {\small \textbf{LLM-JEPA}} \\
    \midrule
    {\small \texttt{.*}} & \small 88.5\% & \small 29.9\% & \small 68.9\% \\
    \small \texttt{.*.*} & \small 68.0\% & \small 28.0\% & \small 32.0\% \\
    \bottomrule
    \end{tabular}
    \label{tab:dotstar}
\end{table}

Following LLM-JEPA, we compute the singular value decomposition (SVD) of $\operatorname{Enc}(\operatorname{Text}) - \operatorname{Enc}(\operatorname{Code})$ to gain insight into the learned representations. Interestingly, we find (\cref{fig:poly}) that Semantic Tube exhibits polymorphism: when the difference vectors $\operatorname{Enc}(\operatorname{Text}) - \operatorname{Enc}(\operatorname{Code})$ are normalized, the singular value spectrum aligns with LLM-JEPA; however, without normalization, it closely resembles regular fine-tuning. This indicates that Semantic Tube enforces structure on the directions (normalized vectors) while tolerating complexity on the raw vectors. We conjecture that this mechanism allows Semantic Tube to maintain flexibility and preserve diversity.

\begin{figure}[h!]
    \centering
    \begin{subfigure}{1.0\linewidth}
    \centering
    \includegraphics[width=1.0\linewidth]{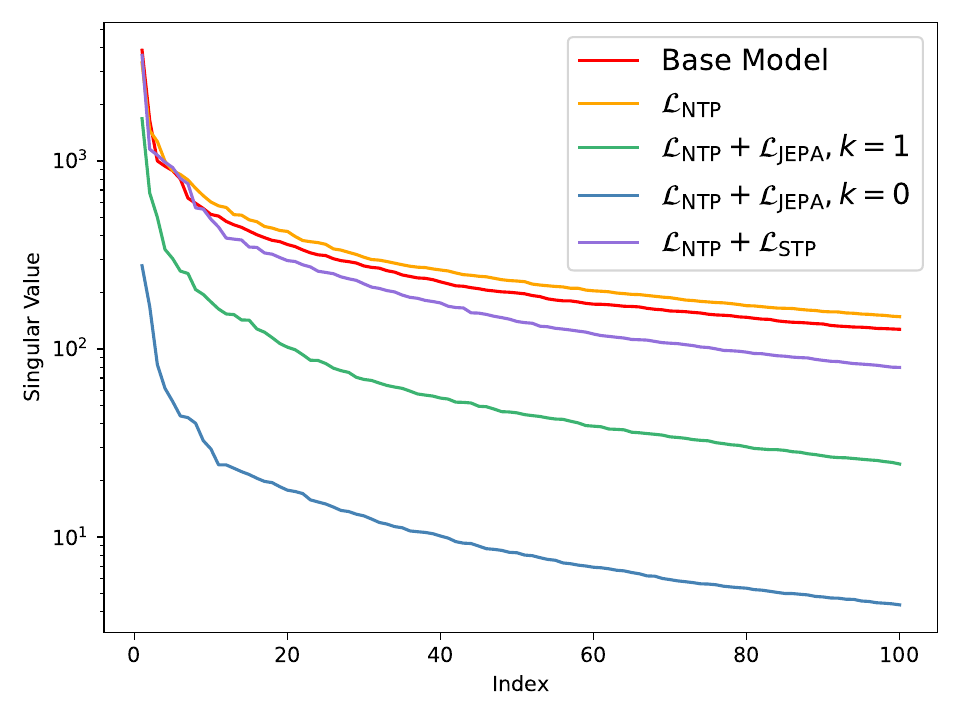}
    \caption{\small Without Normalization}
    \end{subfigure}
    \begin{subfigure}{1.0\linewidth}
    \centering
    \includegraphics[width=1.0\linewidth]{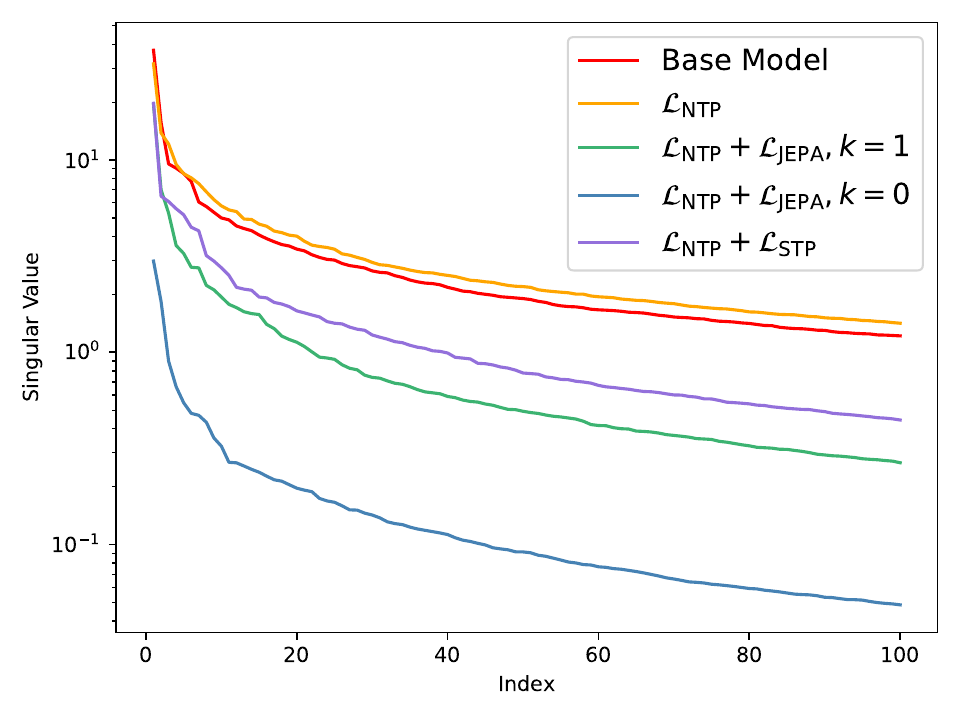}
    \caption{\small With Normalization}
    \end{subfigure}
    \caption{\small SVD decomposition demonstrating Semantic Tube's polymorphism. \textbf{(a)} Without normalization, the SVD profile closely resembles regular fine-tuning. \textbf{(b)} With normalization, the SVD aligns with LLM-JEPA. Collectively, this indicates that Semantic Tube enforces a simple structure on the directions (normalized vectors) mapping ${\rm Text}$ to ${\rm Code}$, while tolerating complexity in the unnormalized vectors. Note that the relative relationships among the base model, regular fine-tuning, and LLM-JEPA remain unchanged with or without normalization.}
    \label{fig:poly}
\end{figure}

Collectively, these results validate (P3).

\subsection{Tuning $\lambda$}

Semantic Tube introduces a single hyperparameter, $\lambda$. Empirically, we observe that the accuracy vs. $\lambda$ curve is concave (\cref{fig:lambda}), typically peaking between $0.01$ and $0.08$ (\cref{tab:lambda}). Notably, this behavior persists across other variations (see~\cref{sec:ablation}): the accuracy curves remain concave, and the optimal $\lambda$ consistently falls within the $0.01$–$0.08$ range (see~\cref{fig:lambda-variations}). This validates (P4).

\begin{figure}[h!]
    \centering
    \includegraphics[width=1.0\linewidth]{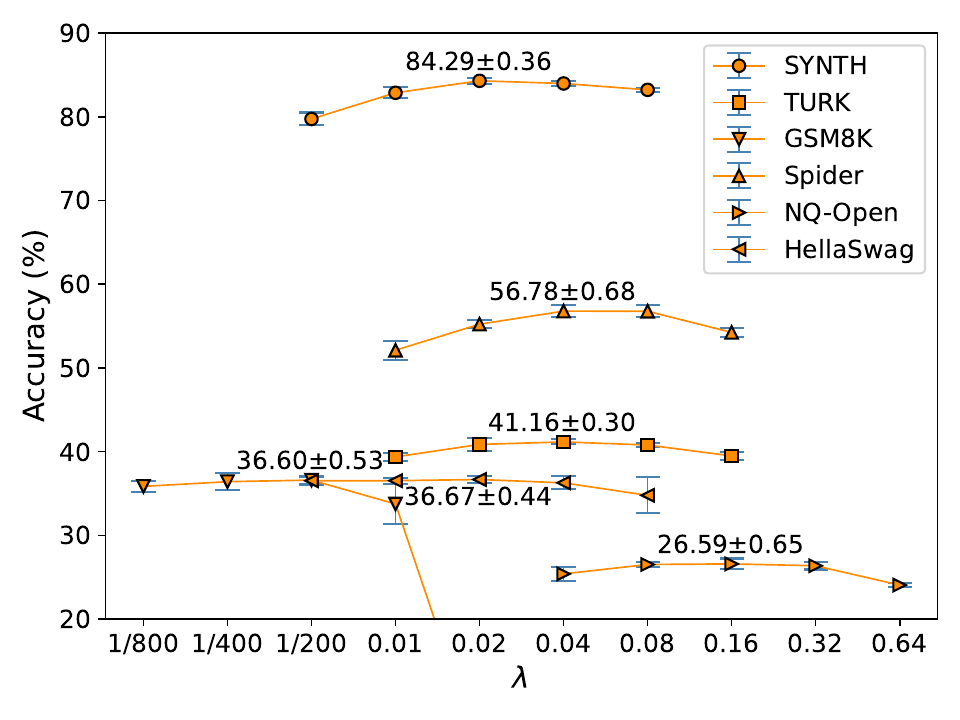}
    \caption{\small Impact of $\lambda$ tuning on Llama-3 1B across various datasets. In most cases, peak performance is achieved within the range of $0.01$ to $0.08$.}
    \label{fig:lambda}
\end{figure}

\begin{table}[h!]
    \centering
    \caption{\small Optimal $\lambda$ values yielding maximum accuracy.}
    \label{tab:lambda}
    \begin{tabular}{c|c|c|c|c|c}
    \toprule
    \small SYNTH & \small TURK & \small GSM8K & \small Spider & \small NQ & \small HS \\
    \midrule
    \small 0.02 & \small 0.04 & \small 0.005 & \small 0.04 & \small 0.16 & \small 0.02 \\
    \hline
    \hline
    \small Gemma2 & \small Qwen3 & \small R1 Dist & \small OLMo & \multicolumn{2}{c}{\small OpenELM} \\
    \midrule
    \small 0.005 & \small 0.02 & \small 0.04 & \small 0.01 & \multicolumn{2}{c}{\small 0.04} \\
    \hline
    \hline
    \multicolumn{3}{c|}{\small Llama3 3B} & \multicolumn{3}{c}{\small Llama3 8B} \\
    \midrule
    \multicolumn{3}{c|}{\small 0.01} & \multicolumn{3}{c}{\small 0.0025} \\
    \bottomrule
    \end{tabular}
\end{table}

\subsection{Ablation}
\label{sec:ablation}

We conducted extensive ablation studies on design decisions, establishing that $\mathcal{L}_{\rm STP}$ yields superior performance compared to all variations (\cref{fig:ablation}). Full details are provided in~\cref{sec:appendix-ablation}. We specifically note that the \textbf{Pred} variant---which
trains a linear projector $P$ to minimize $\mathcal{L}_{\rm STP} = 1 - \cos(P(h_r - h_s), h_t - h_r)$---results in degraded performance in all configurations. This validates (P5).

\begin{figure}[h!]
    \centering
    \includegraphics[width=1.0\linewidth]{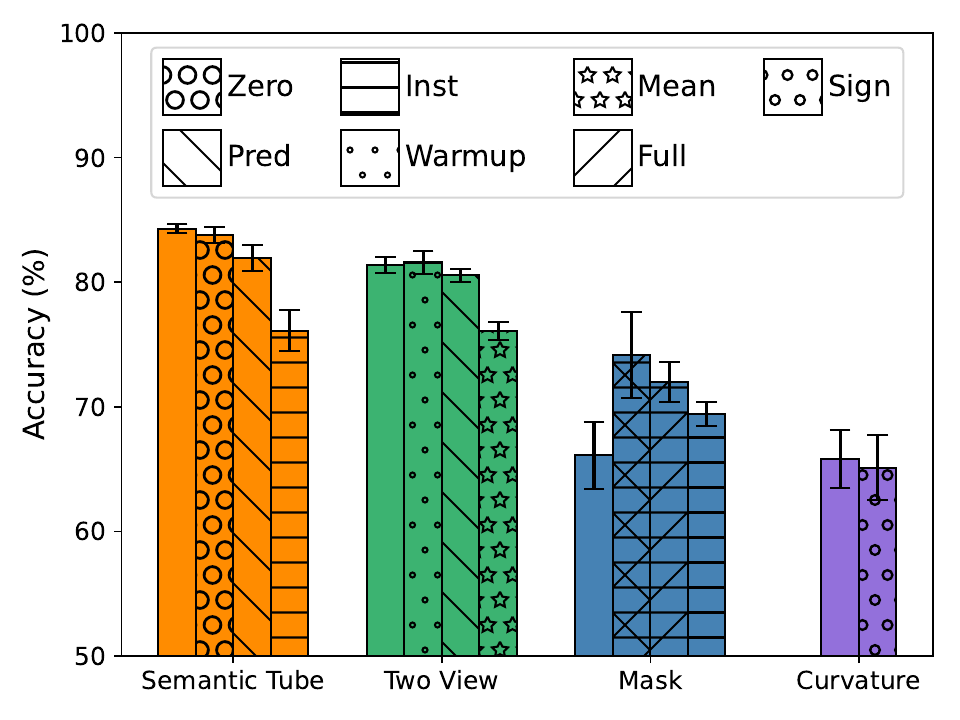}
    \caption{Ablation study. Semantic Tube (our approach) outperforms all variations. Within the Semantic Tube family, alternative configurations consistently degrade performance.}
    \label{fig:ablation}
\end{figure}




\section{Conclusion}
This paper proposes the Geodesic Hypothesis, which posits that token sequence trajectories on the LLM manifold are locally linear geodesics. Based on it, we introduce Semantic Tube Prediction (STP)---a learning objective complementary to Next Token Prediction---which compresses hidden state trajectories into a signal-rich tube centered on the geodesic. Our approach generalizes LLM-JEPA by eliminating the need for manual scaffolding of two-view structures, additional compute, or auxiliary predictors. Empirically, STP significantly improves Signal-to-Noise Ratio, allowing models to maintain accuracy even when training data is reduced to $\frac{1}{16}$, thereby challenging standard Power Law scaling. Our framework unifies the Linear Representation and Manifold Hypotheses under the Principle of Least Action.

\section*{Impact Statement}

This paper presents work whose goal is to advance the field of machine learning. There are many potential societal consequences of our work, none of which we feel must be specifically highlighted here.

\bibliography{main}
\bibliographystyle{icml2026}

\newpage
\appendix
\onecolumn





\section{Training ODE}
\label{sec:appendix-ode}

In this section, we present a form of $u(\cdot)$ and $f(\cdot)$ such that $x_{\le t + 1} \ominus x_{\le t} = x_{\le t + 1} - x_{\le t}$. Throughout the section, we slightly abuse notation by letting $x_t$ denote both a token and its embedding vector $x_t \in \mathbb{R}^{d_{\rm model}}$, and letting $x_{\le t}$ denote both a token sequence and its embedding vector $x_{\le t} \in \mathbb{R}^{T\times d_{\rm model}}$:
$$x_{\le t} = [x_1, ..., x_t, 0, ..., 0].$$
Let $f(x_{\le t}) \in \mathbb{R}^d$. Let $u(\cdot): \mathbb{R}^d \rightarrow \mathbb{R}^{d_{\rm model}}$ be the unembedding function that maps the hidden state back to the token embedding.

Note that we need a function to lift $u(f(x_{\le t}))$ from $\mathbb{R}^{d_{\rm model}}$ to $\mathbb{R}^{T\times d_{\rm model}}$. Define $v(\cdot, \cdot): \mathbb{R}^{d_{\rm model}} \times \mathbb{N} \rightarrow \mathbb{R}^{T\times d_{\rm model}}$ such that
$$v(x, t) = [0, ..., 0, \underbrace{x}_{{\rm index}\ t + 1}, 0, ..., 0]$$
Hence, we have
$$x_{\le t+1} = v(u(f(x_{\le t})), t)$$
Define the $\ominus$ operator as
$$x_{\le t+1} \ominus x_{\le t} = v(x_{t+1}, t)$$
By the definition of $v(\cdot, \cdot)$, we have
$$x_{\le t+1} \ominus x_{\le t} = x_{\le t+1} - x_{\le t}$$

Note that the network is now in the form $v(u(f(x_{\le t})), t)$, which can be written as $g(x_{\le t}, t)$ and satisfies the formulation of an ODE.

\section{Inference SDE}
\label{sec:appendix-sde}

At training time, the unembedding error $\epsilon_t$ does not propagate to the next token. However, at inference time, $h_{t+1}$ depends (indirectly) on $h_t$, causing $\epsilon_t$ to accumulate into a Brownian motion term.

\citet{yang2021tensor2b} established that in the limit of infinite width, the pre-activations of a neural network (and thus the hidden state) are well-approximated by Gaussian processes. Hence, we can assume $\epsilon_t$ are i.i.d. Gaussian. Furthermore, as shown by~\citep{yang2021tensor2b}, $\epsilon_t$ remains i.i.d. Gaussian when passed through a randomly initialized neural network, which remains constant in the infinite-width limit. Consequently, $\epsilon_t$ accumulates to form a Brownian motion term $dW_t$. Thus the inference process can be modeled by a Stochastic Differential Equation (SDE).
\begin{proposition}[Inference SDE]
The inference process of an LLM can be modeled by an SDE in the token sequence space $\mathbb{R}^{T \times d_{\rm model}}$, $$dx_{\le t} = \mathring{u}\circ\mathring{f}(x_{\le t})dt + \sigma_t dW_t$$
    \label{prop:ode-inference}
\end{proposition}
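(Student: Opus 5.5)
The plan is to assemble Proposition~\ref{prop:ode-inference} from three pieces: the embedding construction of Appendix~\ref{sec:appendix-ode}, an explicit error-propagation recursion at inference time, and a Gaussian limit for the accumulated errors.

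\textbf{Step 1: the drift.} The drift comes from the embedding construction of Appendix~\ref{sec:appendix-ode}. With the padded representation $x_{\le t}=[x_1,\dots,x_t,0,\dots,0]$ and the lifting map $v(\cdot,t)$, the converged network gives
$$x_{\le t+1}-x_{\le t}=v\bigl(\mathring u\circ\mathring f(x_{\le t}),t\bigr),$$
which is the deterministic part. This is exactly the Euler step of the drift in Proposition~\ref{prop:ode-training}. So I only need to track what changes once teacher forcing is removed.

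\textbf{Step 2: the inference recursion.} At inference the hidden state used to emit the next token is $h_t=\mathring f(x_{\le t})+\epsilon_t$, not $\mathring f(x_{\le t})$. Linearising $u$ around $h^\ast_t$ (by \eqref{eq:h-star}) gives
$$x_{\le t+1}-x_{\le t}=v\bigl(\mathring u\circ\mathring f(x_{\le t}),t\bigr)+v\bigl(J_u(h^\ast_t)\,\epsilon_t,t\bigr)+O(\|\epsilon_t\|^2).$$
Here $J_u$ is the Jacobian of the unembedding. The crucial difference from training is that this perturbed $x_{t+1}$ is fed back into $\mathring f$. Hence the perturbations enter every later state, and the cumulative deviation of $x_{\le t}$ is the sum $\sum_{k\le t} v(J_u\epsilon_k,k)$, propagated through the network.

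\textbf{Step 3: the noise term, which I expect to be the main obstacle.} I need to show that this sum of perturbations behaves as $\sigma_t\,dW_t$. I would invoke the Tensor Programs result \citep{yang2021tensor2b}: in the infinite-width limit the pre-activations, and hence the residuals $\epsilon_t$, are jointly Gaussian. I would also assume they are independent across positions with variance $\sigma_t^2$; this independence is an assumption, not something the cited result alone delivers.

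The difficulty is justifying that passing these errors through the (frozen) network preserves i.i.d. Gaussianity. I would try two things.
\begin{itemize}
\item First, again appeal to \citet{yang2021tensor2b}, which says Gaussian inputs to a network that is effectively fixed in the infinite-width (NTK/lazy) regime remain Gaussian, so the linear images $J_u\epsilon_k$ stay independent Gaussian increments.
\item Failing an exact statement, fall back to a martingale central limit theorem. The increments are centred, conditionally on the past, with bounded conditional variance, so their partial sums converge to Brownian motion by Donsker-type invariance.
\end{itemize}

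\textbf{Step 4: passing to continuous time.} Writing the increments as $\sigma_t\,\Delta W_t$ and taking the standard Euler--Maruyama continuum limit of the difference equation yields
$$dx_{\le t}=\mathring u\circ\mathring f(x_{\le t})\,dt+\sigma_t\,dW_t.$$
Two loose ends remain. The $O(\|\epsilon\|^2)$ remainder is absorbed by a small-noise assumption. The $v(\cdot,t)$ lifting means $W_t$ lives in $\mathbb{R}^{T\times d_{\rm model}}$, with each increment supported on the $(t+1)$-th block; this is consistent with a (degenerate-covariance) Brownian motion in that space.
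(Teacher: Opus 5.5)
Your proposal is correct and follows essentially the same route as the paper: use the infinite-width Tensor Programs result to treat $\epsilon_t$ as i.i.d.\ Gaussian (the paper also simply assumes the independence), argue that Gaussianity survives passage through the frozen network, and let these errors, which feed back only at inference, accumulate into $\sigma_t\,dW_t$; your explicit drift via the lifting map $v$, the Euler--Maruyama limit and the martingale-CLT fallback are extra detail the paper omits. The one added step you should keep flagged as an assumption is the linearisation of $\mathring u$. It needs a differentiable unembedding (as the paper implicitly assumes when invoking Picard--Lindel\"of): for an argmax unembedding $J_u$ vanishes inside each Voronoi cell, so errors would enter only as discrete token flips. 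Moreover, a state-dependent $J_u(h^\ast_t)$ gives a diffusion coefficient $\sigma(x_{\le t},t)$ rather than $\sigma_t$, so the increments $J_u\epsilon_k$ are conditionally Gaussian martingale differences rather than independent Gaussians.
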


Consider the example in~\cref{fig:voronoi}, if the Brownian motion shifts the top trajectory to the bottom, mode collapse occurs. Conversely, if the bottom trajectory shifts to the top, mode collapse occurs. This motivates the construction of an approach to explicitly suppress $\epsilon_t$. Indeed,~\cref{sec:loss-landscape} demonstrates that next token prediction alone is insufficient for high-quality generation, making our approach a necessary complement.

\section{Context-Aware Hidden State}
\label{sec:context-aware}

We can view $h_t - h_s$ as the semantic evolution induced by the sub-sequence $x_{[s, t]}$ given the context $x_{\le s}$. In this sense, $h_t - h_s$ acts as a context-aware hidden state transition, which is significantly more informative than the static hidden state of the isolated sub-sequence $x_{[s,t]}$.

For example, given the prefix $\vec{v}_{\rm The}, \vec{v}_{\rm capital}, \vec{v}_{\rm of}$, appending the token $\vec{v}_{\rm France}$ shifts the overall semantic trajectory toward $\vec{v}_{Paris}$. However, given a different prefix $\vec{v}_{\rm The}, \vec{v}_{\rm language}, \vec{v}_{\rm of}$, appending the same token $\vec{v}_{France}$ shifts the trajectory toward $\vec{v}_{French}$. If we were to compute the hidden state of $\vec{v}_{France}$ in isolation, we would lose this contextual nuance and fail to capture the context-specific semantic semantic shift.

\begin{figure}[h!]
    \centering
    \includegraphics[width=0.25\linewidth]{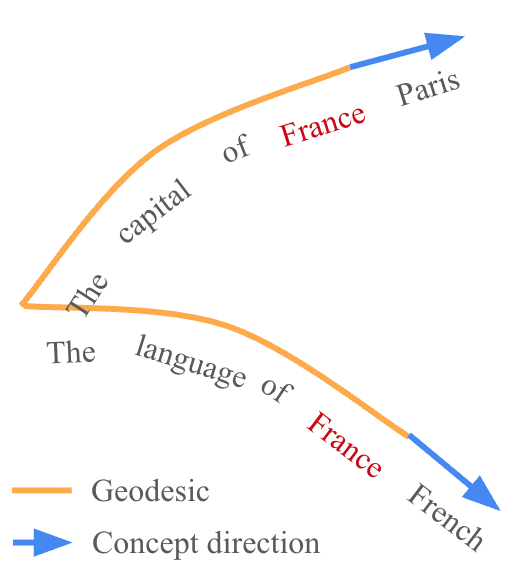}
    \caption{\small The same token $\vec{v}_{France}$ directs the geodesic along different concept directions when appended to distinct prefixes, illustrating the necessity of the context-aware state difference $h_t - h_s$.}
    \label{fig:ht-hs}
\end{figure}

Thus, $h_t - h_s$ serves as a context-aware representation of the added information.

\section{Proof of the Straightening Lemma}
\label{sec:tube-lemma}

In this section, we provide the proof for~\cref{lemma:flat}. The objective is to show $$\|(h_r - h_s)_{\perp h^\ast_t - h^\ast_s} \|_2 \le \sqrt{2\epsilon}\| h_r - h_s\|_2.$$

\begin{figure}[h!]
    \centering
    \includegraphics[width=0.25\linewidth]{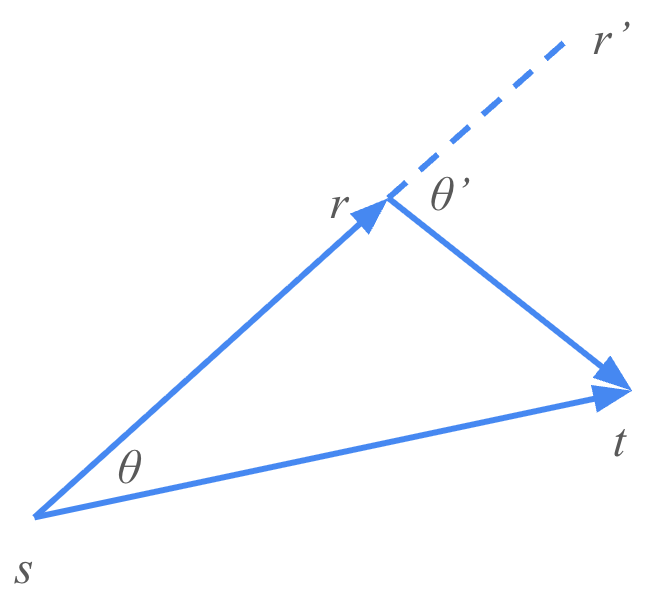}
    \caption{Geometric illustration for the proof of~\cref{lemma:flat}}
    \label{fig:flat-lemma}
\end{figure}

Referring to~\cref{fig:flat-lemma}, we have
$$\|(h_r - h_s)_{\perp h^\ast_t - h^\ast_s} \|_2 = \|h_r - h_s\|_2 \cdot \sin\theta$$
Since $\theta' \ge \theta$, it follows that
$$\|(h_r - h_s)_{\perp h^\ast_t - h^\ast_s} \|_2 \le \|h_r - h_s\|_2 \cdot \sin\theta'$$
We also have
$$\mathcal{L}_{\rm STP} = 1 - \cos\theta' \le \epsilon$$
When $\epsilon$ is sufficiently small, we can approximate $\cos\theta' \approx 1 - \frac{\theta'^2}{2}$. Hence
$$\frac{\theta'^2}{2} \lesssim \epsilon$$
Rearranging gives 
$$\theta' \lesssim \sqrt{2\epsilon}$$
Also, when $\theta'$ is sufficiently small, $\sin \theta' \approx \theta'$. Therefore
$$\|(h_r - h_s)_{\perp h^\ast_t - h^\ast_s} \|_2 \le \|h_r - h_s\|_2 \cdot \sin\theta' \lesssim \sqrt{2\epsilon}\|h_r - h_s\|_2. \quad\quad\quad\square$$

\section{Proof of the Semantic Tube Theorem}
\label{sec:tube-theorem}

We introduce two auxiliary tokens, \texttt{<before-bos>} and $\texttt{<after-eos>}$. The token \texttt{<before-bos>} appears only at the 0-th position and always precedes \texttt{<bos>}, while \texttt{<after-eos>} appears only at the $\tau+1$-th position and always follows $\texttt{<eos>}$. This augmentation increases the total sequence length from $\tau$ to $\tau+2$. By anchoring the sequence with \texttt{<before-bos>} and \texttt{<after-eos>}, we ensure that the boundary conditions $h_0=h^\ast_0$ and $h_{\tau+1} = h^\ast_{\tau+1}$ are satisfied.

The proof follows from these conditions. $\square$

\section{Inference Cone}
\label{sec:appendix-inference-cone}



As STP explicitly reduces $\epsilon_t$, it lowers $\sigma_t$ in the Brownian Motion term of~\cref{prop:ode-inference}. At inference time, the Brownian motion term causes the token sequence trajectory diverge into a cone whose radius grows at a rate $\propto \sigma_t\sqrt{t}$. A lower $\sigma_t$ reduces the probability that the cone collides with another token sequence, which would causes mode collapse (\cref{fig:cone}).

\begin{figure}[h!]
    \centering
    \includegraphics[width=0.4\linewidth]{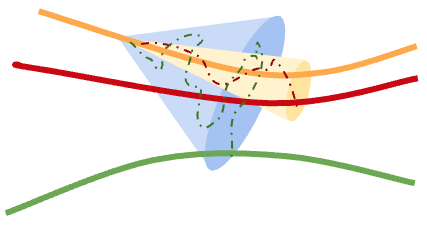}
    \caption{\small The inference cone defines the probabilistic range of a Brownian motion, and its radius grows $\propto \sigma_t\sqrt{t}$. A larger $\sigma_t$ leads to a wider cone, which has a high probability of colliding with a token sequence trace that is far away (blue cone and green geodesic), while a smaller $\sigma_t$ leads to a narrower cone that may only collide with a nearby trace (yellow cone and red geodesic). The dotted red and green fine lines are the Brownian motions confined by the yellow and blue cones, respectively.}
    \label{fig:cone}
\end{figure}

\begin{proposition}[Inference Cone]
The distortion between $h_t$ and $h^\ast_t$ behaves as a Gaussian process, where the scale of the deviation grows as $\| h_t - h^\ast_t\|_2 \propto \sigma\sqrt{t}$
    \label{prop:inference-cone}
\end{proposition}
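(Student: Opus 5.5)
The plan is to read the inference-time dynamics off \cref{prop:ode-inference} and compare them with the error-free dynamics of \cref{prop:ode-training}. The deviation $\delta_t := h_t - h^\ast_t$ then turns out to be an accumulated sum of i.i.d.\ Gaussian increments, so its scale grows like a random walk. Concretely, I would write the inference recursion as
\[
x_{\le t+1} = x_{\le t} + \mathring{u}\circ\mathring{f}(x_{\le t}) + \sigma\,\xi_t .
\]
Here $\xi_t$ collects the propagated unembedding error $\epsilon_t$. Following \cref{sec:appendix-sde} and the infinite-width argument of \citet{yang2021tensor2b}, I treat the $\xi_t$ as i.i.d.\ standard Gaussians. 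The optimal trajectory $h^\ast_t = \mathring{f}(x^\ast_{\le t})$ is generated by the same recursion with $\sigma = 0$ and the same prompt (initial condition).

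\textbf{Step 1: telescope the noise in sequence space.} Summing the recursion from the common initial condition $t_0$ gives
\[
x_{\le t} - x^\ast_{\le t} = \sum_{k<t}\bigl[\mathring{u}\circ\mathring{f}(x_{\le k}) - \mathring{u}\circ\mathring{f}(x^\ast_{\le k})\bigr] + \sigma\sum_{k<t}\xi_k .
\]
The second sum is exactly the discretization of $\sigma W_t$. It is Gaussian with covariance $\sigma^2 t\, I$, since independent variances add, so its norm concentrates at order $\sigma\sqrt{t}$. Here I would use the fact that $\mathbb{E}\|W_t\|_2^2 = t\cdot\dim$, together with standard Gaussian norm concentration.

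\textbf{Step 2: push from $x$-space to $h$-space.} Since $h_t - h^\ast_t = \mathring{f}(x_{\le t}) - \mathring{f}(x^\ast_{\le t}) + \epsilon_t$ and $\mathring{f}$ is Lipschitz (the assumption already made in \cref{sec:hallucination}), I linearize $\mathring{f}$ around $x^\ast_{\le t}$. This makes $h_t - h^\ast_t$ a linear image, via the Jacobian, of a Gaussian vector plus an independent Gaussian $\epsilon_t$. A linear image of a Gaussian vector is again Gaussian, so $\{h_t - h^\ast_t\}_t$ is a Gaussian process. The Lipschitz bound $L$ caps its scale at $L\sigma\sqrt{t} + O(\sigma)$, which is $\propto \sigma\sqrt{t}$.

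\textbf{Main obstacle.} The hard part is the drift-difference term in Step 1. In general it feeds the deviation back into itself and, by Gr\"onwall, could amplify it exponentially rather than diffusively. I would handle this in the same local, first-order spirit as \cref{lemma:flat}. In the tube regime the deviation is small, so the drift difference is linear in $\delta$ and the process is Ornstein--Uhlenbeck-like. Over the short horizons of \cref{def:local-linear}, or when the Jacobian of $\mathring{u}\circ\mathring{f}$ is near-zero, which is the case in the infinite-width regime where the network is effectively frozen, this contribution is lower order relative to the $\sigma\sqrt{t}$ diffusion term. The statement is therefore a leading-order, ``$\propto$'' claim, and I would state that caveat explicitly rather than attempt a global bound.
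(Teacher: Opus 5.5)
Your proposal is sound at the paper's level of rigor, but it takes a different route. The paper works directly in hidden space. It invokes the infinite-width picture of \citet{yang2021tensor2b} to assert that $\mathring{f}$ maps the Brownian term of \cref{prop:ode-inference} to a Brownian term, writes $h_t - h^\ast_t = \sum_{s\le t}\epsilon_s$ outright, and reads off the $\sigma\sqrt{t}$ scale from Donsker's theorem as $t\to\infty$. The drift difference $\mathring{u}\circ\mathring{f}(x_{\le k}) - \mathring{u}\circ\mathring{f}(x^\ast_{\le k})$ never appears.

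You instead accumulate the noise in sequence space and transport it to $h$-space by a first-order expansion of $\mathring{f}$. Your Gaussianity therefore comes from ``a linear image of a Gaussian is Gaussian'' rather than from an infinite-width limit. The paper's route buys a one-line random-walk representation in $h$-space, giving upper and lower $\sqrt{t}$ behaviour at once. The price is a strong pushforward assumption and silently discarding the feedback term. Yours needs, beyond the shared i.i.d.\ Gaussian noise model, only smoothness of $\mathring{f}$. It also exposes the hypothesis the diffusive law really requires: over the horizon considered, the linearized drift must neither expand the deviation (Gr\"onwall, exponential growth) nor contract it (Ornstein--Uhlenbeck saturation). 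This makes your conclusion a finite-horizon statement. The paper's $t\to\infty$ phrasing sits exactly where the neglected feedback matters most.

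Two points to tighten.

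First, a Lipschitz constant only caps the deviation, $\|h_t - h^\ast_t\|_2 \lesssim L\sigma\sqrt{t}$, while ``$\propto$'' needs a matching lower bound. Your linearization supplies it. Let $J_t$ be the Jacobian of $\mathring{f}$ at $x^\ast_{\le t}$. Neglecting drift, $\mathbb{E}\|J_t\,\sigma\sum_{k<t}\xi_k\|_2^2 = \sigma^2 t\,\|J_t\|_F^2$, so you need $\|J_t\|_F$ bounded away from $0$ and $\infty$. In fact you need not discard the drift to get Gaussianity. The linearized deviation $\delta_{k+1} = (I + A_k)\delta_k + \sigma\xi_k$ is exactly Gaussian for any Jacobians $A_k$. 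Its scale is automatically linear in $\sigma$, which is the dependence the paper actually uses to argue that STP narrows the cone. Only the $\sqrt{t}$ factor hinges on the products $\prod_j(I + A_j)$ staying bounded above and below.

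Second, drop the claim that the drift Jacobian is near zero because the network is `effectively frozen' at infinite width. The lazy/NTK regime freezes the \emph{parameters} during training, not the input--output sensitivity of $\mathring{u}\circ\mathring{f}$. It therefore does not license neglecting the feedback term, so rest that step on the short-horizon argument alone.
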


\begin{proof}[Proof]
According to~\cref{prop:ode-inference}, at inference time, we model the token sequence trajectory as following an SDE $dx_{\le t} = \mathring{u}\circ\mathring{f}(x_{\le t})dt + \sigma_t dW_t$, where $\sigma_t dW_t$ is a Brownian motion. Let $h_t=\mathring{f}(x_{\le t})$ be the hidden state. Let $x^\ast_{\le t}$ be the error-free generation satisfying $dx^\ast_{\le t} = \mathring{u}\circ\mathring{f}(x^\ast_{\le t})dt$, and let $h^\ast_t = \mathring{f}(x^\ast_{\le t})$ be the error-free hidden state. We can quantify the distortion between $h_t$ and $h^\ast_t$ by examining how the Brownian motion is transformed by $\mathring{f}$.

\Citet{yang2021tensor2b} establishes that in the infinite-width limit, $\mathring{f}$ converges to a Neural Tangent Kernel (NTK) determined by random initialization. It further showed that Gaussian noise remains Gaussian when passed through a randomly initialized network. Hence, a Brownian motion remains a Brownian motion when passed through $\mathring{f}$. Therefore, $$h_t - h^\ast_t\ = \sum_{s\le t} \epsilon_s$$
where $\epsilon_s$ are Gaussian noises. By Donsker's theorem, when $t \rightarrow \infty$, $\frac{1}{\sqrt{t}}\sum_{s\le t} \epsilon_s \sim N(0, \Sigma)$. Consequently, the magnitude of the distortion scales as $$\left \Vert \sum_{s\le t} \epsilon_s \right\Vert_2  \propto \sigma\sqrt{t}.$$ Putting everything together, the distortion between $h_t$ and $h^\ast_t$ satisfies $\| h_t - h^\ast_t\|_2 \propto \sigma\sqrt{t}$
\end{proof}
 
\cref{prop:inference-cone} implies that with high probability, the trajectory of the generated hidden state $h$ is confined within a cone centered at $h^\ast$ whose radius grows at a rate $\propto \sigma\sqrt{t}$. 

When mode collapse occurs at inference time, i.e., a generated sequence $x_{\le t}$ collides with $y_{\le t'}$, then their corresponding hidden states $h$ and $g$ must collide. Let $\|h^\ast - g^\ast\|_2$ be the minimum distance between $h^\ast$ and $g^\ast$. By~\cref{prop:inference-cone}, $\forall \varepsilon > 0$, $\exists c$,
$$P(\|h^\ast - g^\ast\|_2 > c\cdot \sigma\sqrt{t}) \le \varepsilon$$
On the other hand, $\mathcal{L}_{\rm STP}$ suppresses $\epsilon_t$ and consequently reduces $\sigma$, which decreases the lower bound of the probability of mode collapse.

\section{Implementation Details}
\label{sec:details}

If the training data already possesses a two-view structure, such as a $(query, answer)$ pair, one can leverage it by anchoring $s$ at the beginning of the $query$ and $t$ at the end of the $answer$. However, we suggest that $r$ should be randomly selected to maximize the benefit of the STP loss. As demonstrated in our ablation study, fixing $r$ at the end of the $query$ yields lower accuracy.

Typically, $h_t - h_s$ does not equal the hidden state of the isolated sub-sequence $x_{[s, t]}$. However, as discussed~\cref{sec:context-aware}, we can view $h_t - h_s$ as the semantic evolution induced by the sub-sequence $x_{[s, t]}$ given the context $x_{\le s}$. In this sense, $h_t - h_s$ acts as a context-aware hidden state, which is significantly more informative than the hidden state of $x_{[s,t]}$ computed in isolation. For example, given the prefix $\vec{v}_{\rm The}, \vec{v}_{\rm capital}, \vec{v}_{\rm of}$, appending the token $\vec{v}_{\rm France}$ shifts the overall meaning to $\vec{v}_{\rm Paris}$. Conversely, given the prefix $\vec{v}_{\rm The}, \vec{v}_{\rm language}, \vec{v}_{\rm of}$, appending $\vec{v}_{\rm France}$ shifts the meaning to $\vec{v}_{\rm French}$. Computing the hidden state of $\vec{v}_{\rm France}$ separately loses this context and fails to capture the context-specific meaning of the tokens (see~\cref{fig:ht-hs}).

We can also leverage $h_t - h_s$ to bypass unwanted tokens. For example, setting $s > 0$ allows us to skip the system prompt. Similarly, in multiple-choice Q\&A, distractor choices that are semantically inconsistent with the $query$ are often located between the $query$ and the correct $answer$. In such cases, we can pick $r$ and $r'$ such that $x_{[s, r]}$ is the $query$ and $x_{[r', t]}$ is the correct $answer$, computing the STP loss as:$$\mathcal{L}_{\rm STP} = 1 - \cos(h_t - h_r', h_r - h_s).$$This formulation effectively skips the irrelevant choice branches in the middle.

Finally, the STP loss assumes that $h_s$, $h_r$, and $h_t$ are collinear, which may not hold strictly in reality as geodesics can exhibit curvature. In practice, this implies that we must select a small $\lambda$ to tolerate the angular deviation between $h_t - h_r$ and $h_r - h_s$. Indeed, our experiments consistently show that $\lambda \approx 0.01$ is effective across various models, datasets, and model sizes.

\section{Signal-to-Noise Ratio}
\label{sec:snr-proofs}

Directly measuring Signal-to-Noise Ratio (SNR) in the latent representations of LLMs is intractable. In self-supervised learning, the decomposition of activations into ``semantic signal'' and ``nuisance noise'' is not explicitly observable without access to the ground-truth data manifold.

In this subsection, we formally show an information theoretic link between SNR and data efficiency and training accuracy. Hence we can validate our hypothesis via the predicted impact on them.

We model LLM training process as extracting information about a discrete target $Y$ (tokens) from continuous latent representations $X$ (hidden states). Let $Y \in \mathcal{V}$ be the discrete target token from a vocabulary of size $\vert \mathcal{V} \vert$. Let $X^m = \{ X_i, 1 \le i \le m \}$ be a set of $m$ hidden states that are conditionally i.i.d. given $Y$. The training objective is to minimize cross-entropy, which is asymptotically equivalent to minimizing the conditional entropy $H(Y|X^m)$. 

\begin{lemma}[Data Efficiency]
\label{lemma:data-efficiency}
\begin{equation}
    H(Y | X^m) \geq H(Y) - m \cdot I(Y; X)
    \label{eq:data-efficiency}
\end{equation}
\end{lemma}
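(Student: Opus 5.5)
The plan is to write the conditional entropy as a difference of unconditional entropy and mutual information, and then bound the mutual information between $Y$ and the $m$ observations by $m$ times the single-sample mutual information. By definition,
$$H(Y\mid X^m) = H(Y) - I(Y; X^m),$$
so \eqref{eq:data-efficiency} is equivalent to
$$I(Y; X^m) \le m\, I(Y;X).$$
Since the $X_i$ are continuous and $Y$ is discrete, I will work with mixed-type mutual information defined via densities, or equivalently as a supremum over quantizations. All identities used below (chain rule, nonnegativity, conditioning) hold in this generality.

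The key step applies the chain rule to the conditional entropy of the observations. Write
$$I(Y;X^m) = h(X^m) - h(X^m\mid Y).$$
Subadditivity of (differential) entropy gives
$$h(X^m)\le \sum_{i=1}^m h(X_i).$$
Conditional independence of the $X_i$ given $Y$ gives
$$h(X^m\mid Y) = \sum_{i=1}^m h(X_i\mid Y).$$
Subtracting the two,
$$I(Y;X^m)\le \sum_{i=1}^m \bigl(h(X_i)-h(X_i\mid Y)\bigr) = \sum_{i=1}^m I(Y;X_i).$$
The hypothesis that the $X_i$ are conditionally i.i.d.\ given $Y$ makes every pair $(Y,X_i)$ have the same joint law as $(Y,X)$. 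Hence each term equals $I(Y;X)$, the sum is $m\,I(Y;X)$, and substituting back yields the lemma.

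The main obstacle is rigor rather than the idea: differential entropies may be infinite or undefined, so the subtraction step needs care. To avoid this, I would instead run the chain rule directly on mutual information,
$$I(Y;X^m) = \sum_i I(Y;X_i\mid X^{i-1}),$$
and show $I(Y;X_i\mid X^{i-1})\le I(Y;X_i)$. Expanding $I(Y,X^{i-1};X_i)$ two ways gives
$$I(X^{i-1};X_i) + I(Y;X_i\mid X^{i-1}) = I(Y;X_i) + I(X^{i-1};X_i\mid Y).$$
The last term is zero because of conditional independence, and the first term is nonnegative, which gives the desired inequality. This route uses only nonnegativity and the chain rule, so it remains valid for general measurable $X$.

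I would conclude by noting that the bound is informative only while $H(Y) - m\,I(Y;X) > 0$. Since $H(Y\mid X^m)\ge 0$, the number of samples needed to drive the conditional entropy near zero is at least about $H(Y)/I(Y;X)$. Increasing $I(Y;X)$, the SNR, therefore lowers this sample requirement.
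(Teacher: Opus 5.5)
Your proposal is correct, and your first argument is exactly the paper's proof. The paper writes $I(Y;X^m)=H(X^m)-H(X^m\mid Y)$, splits the conditional term by conditional independence, bounds $H(X^m)\le\sum_i H(X_i)$ by subadditivity, and uses identical distribution to get $m\,I(Y;X)$.

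The route you say you would actually adopt is a genuinely different derivation: the chain rule $I(Y;X^m)=\sum_i I(Y;X_i\mid X^{i-1})$, combined with $I(Y;X_i\mid X^{i-1})\le I(Y;X_i)$ obtained by expanding $I(Y,X^{i-1};X_i)$ two ways. What it buys is rigor. For continuous hidden states, the paper's $H$ is silently a differential entropy, so its subtraction $H(X^m)-H(X^m\mid Y)$ presupposes these quantities are finite and well defined. Your argument needs only the chain rule and nonnegativity of mutual information. Every term in it is finite, since $I(Y;X_i)\le H(Y)\le\log|\mathcal{V}|$ and, by data processing along $X^{i-1}\to Y\to X_i$, $I(X^{i-1};X_i)\le I(X^{i-1};Y)\le H(Y)$.

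The two routes discard exactly the same quantity. Summing your identity gives $I(Y;X^m)=m\,I(Y;X)-\sum_i I(X^{i-1};X_i)$. The sum $\sum_i I(X^{i-1};X_i)=\sum_i H(X_i)-H(X^m)$ is precisely the slack in the paper's subadditivity step. So the bound is the same: the paper's version is shorter and more intuitive, while yours holds for arbitrary measurable $X$ without integrability assumptions.
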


\begin{proof}[Proof]
The goal is to show:

$$H(Y | X^m) \geq H(Y) - m I(X;Y)$$

By the definition of Mutual Information:

$$H(Y | X^m) = H(Y) - I(Y; X^m)$$

We need to bound $I(Y; X^m)$. Apply chain rule of mutual information,

$$I(Y; X^m) = H(X^m) - H(X^m | Y)$$

Since $X_i$ are conditionally independent given $Y$:

$$H(X^m | Y) = \sum_{i=1}^m H(X_i | Y)$$

For the first term $H(X^m)$, by sub-additivity of entropy, the entropy of the joint distribution is always less than or equal to the sum of individual entropies (independence maximizes entropy):

$$H(X^m) \leq \sum_{i=1}^m H(X_i)$$

Substitute these back into the Mutual Information expansion:

$$I(Y; X^m) \leq \sum_{i=1}^m H(X_i) - \sum_{i=1}^m H(X_i | Y)$$
$$I(Y; X^m) \leq \sum_{i=1}^m \left( H(X_i) - H(X_i | Y) \right)$$
$$I(Y; X^m) \leq \sum_{i=1}^m I(Y; X_i)$$

Since $X_i$ are identically distributed, $I(Y; X_i)$ is the same for all $i$:

$$I(Y; X^m) \leq m \cdot I(Y; X)$$

Finally substitute this upper bound on Information back into step 1. Since we are subtracting a larger value, the result is a lower bound on entropy:

$$H(Y | X^m) = H(Y) - I(Y; X^m) \geq H(Y) - m \cdot I(Y; X)$$
\end{proof}

Suppose $H(Y|X^m) \le \epsilon$ after training, we have $$\epsilon \ge H(Y|X^m) \ge H(Y)-m\cdot I(Y;X)$$ 

Recent theoretical work on infinite-width limits~\citep{yang2021tensor2b} establishes that layer pre-activations converge to Gaussian distributions. Motivated by this, we model the local representation dynamics using a canonical Gaussian Channel approximation with additive noise.
Specifically, we decompose $X = Z + N$, where $Z$ is the latent signal, and $N\sim \mathcal{N}(0, \sigma^2I)$ is the additive Gaussian noise. We define the Signal-to-Noise Ratio as 

$${\rm SNR} = \frac{\mathbb{E}[\|Z\|^2]}{\mathbb{E}[\|N\|^2]}$$

Under the Gaussian channel approximation, mutual information is a logarithmic function of SNR~\citep{shannon1948}:

$$I(X; Y) = \frac{1}{2} \log(1 + \rm{SNR})$$

Substituting this capacity into~\cref{lemma:data-efficiency}, we have

\begin{corollary}[Signal-to-Noise Ratio]
\label{corollary:snr}
\begin{equation}
  m \ge \frac{H(Y) - \epsilon}{\frac{1}{2} \log(1 + {\rm SNR})}
\label{eq:snr}
\end{equation}
\end{corollary}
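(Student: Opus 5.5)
The plan is to combine \cref{lemma:data-efficiency} with the Gaussian channel formula and then solve for $m$. Throughout, $\epsilon$ is the residual conditional entropy after training, i.e.\ $H(Y|X^m)\le\epsilon$, and the identity $I(X;Y)=\tfrac12\log(1+{\rm SNR})$ is taken as given under the Gaussian approximation $X=Z+N$ described above.

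First, I chain the post-training hypothesis with \cref{lemma:data-efficiency}:
\begin{equation*}
\epsilon \ge H(Y|X^m) \ge H(Y) - m\cdot I(Y;X).
\end{equation*}
Rearranging gives
\begin{equation*}
m\cdot I(Y;X) \ge H(Y)-\epsilon .
\end{equation*}
Next, I substitute $I(Y;X)=\tfrac12\log(1+{\rm SNR})$, using the symmetry of mutual information. To divide by this quantity I need it to be strictly positive, which holds as long as ${\rm SNR}>0$, i.e.\ $\mathbb{E}\|Z\|^2>0$. The degenerate case ${\rm SNR}=0$ is handled separately. There the right-hand side of \eqref{eq:snr} is $+\infty$ whenever $H(Y)>\epsilon$, and indeed no finite $m$ can work, since the inequality above would read $0\ge H(Y)-\epsilon>0$. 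When $H(Y)\le\epsilon$ the bound is vacuous anyway. Dividing through in the nondegenerate case yields \eqref{eq:snr}.

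The main obstacle is not the algebra but justifying the capacity identity. Strictly, $\tfrac12\log(1+{\rm SNR})$ is only an \emph{upper bound} on $I(Z;X)$ for a power-constrained signal, with equality when $Z$ is Gaussian. Since $Y\to Z\to X$ is a Markov chain, data processing gives $I(Y;X)\le I(Z;X)\le \tfrac12\log(1+{\rm SNR})$. I would present the step this way, which is in fact the right direction. An upper bound on $I(Y;X)$ only strengthens the lower bound on $m$, so the corollary holds as stated without needing exact equality. The Gaussianity assumption from \citet{yang2021tensor2b} is then needed only to motivate the additive-noise model, not the inequality itself.

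Finally, I note the monotonicity: the bound on $m$ decreases in ${\rm SNR}$. This is the claimed link whereby raising SNR (via $\mathcal{L}_{\rm STP}$) reduces the number of samples required.
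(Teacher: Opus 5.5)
Your main derivation is the paper's own proof: chain $\epsilon\ge H(Y|X^m)\ge H(Y)-m\,I(Y;X)$, substitute $I(X;Y)=\tfrac12\log(1+{\rm SNR})$, and divide. That part is fine. Your treatment of ${\rm SNR}=0$ is also fine. So is your observation that an upper bound $I(Y;X)\le C$ would suffice: $mC\ge m\,I(Y;X)\ge H(Y)-\epsilon$ for $m\ge 0$, whatever the sign of $H(Y)-\epsilon$.

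The problem is the step you say you would actually present in place of the identity, namely $I(Y;X)\le I(Z;X)\le\tfrac12\log(1+{\rm SNR})$. The second inequality is the \emph{scalar} Gaussian-channel bound. In the paper's model the hidden state is a vector: $N\sim\mathcal{N}(0,\sigma^2 I)$ in $\mathbb{R}^d$, so ${\rm SNR}=\mathbb{E}\|Z\|^2/(d\sigma^2)$. Maximizing $h(Z+N)-h(N)$ subject to $\mathbb{E}\|Z\|^2\le P$ gives $\tfrac d2\log(1+{\rm SNR})$, attained by $Z\sim\mathcal{N}(0,\tfrac Pd I)$.

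For $d\ge 2$ the bound you invoke is therefore false, and the failure is not special to Gaussian $Z$. Take $d=2$ and $Y$ uniform over four labels mapped to $Z=a(\pm1,\pm1)$. At low SNR this gives $I(Y;X)\approx{\rm SNR}$, whereas $\tfrac12\log(1+{\rm SNR})\approx\tfrac12{\rm SNR}$. Carried out correctly, your data-processing route yields only
\[
m\ \ge\ \frac{H(Y)-\epsilon}{\tfrac d2\log(1+{\rm SNR})},
\]
which is weaker than the corollary by a factor of $d$.

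So your claim that the corollary ``holds as stated without needing exact equality'' is not justified. The constant $\tfrac12$ comes only from adopting $I(X;Y)=\tfrac12\log(1+{\rm SNR})$ as a modeling assumption, as the paper and your main derivation do, or from restricting to a scalar channel. Keep that assumption explicit rather than replacing it with the capacity bound. The qualitative conclusion, that the required $m$ decreases as ${\rm SNR}$ grows, survives in either version.
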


\Cref{corollary:snr} indicates that $m$ is inversely proportional to $\log(1 + {\rm SNR})$. Consequently, if the Semantic Tube works as expected, it will increase SNR and strictly lower the data requirement $m$.

Let $\hat{Y} = f(X^m)$ be the estimator of $Y$ produced by the model. Let $P_e = P(\hat{Y} \neq Y)$ be the probability of error (incorrect token generation). Fano's Inequality \citep{cover1991} provides a lower bound on the conditional entropy $H(Y | X^m)$ in terms of the error probability:

$$H(Y | X^m) \leq H_b(P_e) + P_e \log(|\mathcal{V}| - 1)$$

where $H_b(P_e)$ is the binary entropy function. For LLMs, $|\mathcal{V}| \gg 1$, the term $P_e \log |\mathcal{V}|$ dominates $H_b(P_e)$. Hence we can simplify Fano's inequality to be:

\begin{equation}
    H(Y | X^m) \leq P_e \log(|\mathcal{V}| - 1)
    \label{eq:fano}
\end{equation}

Plug~\cref{eq:data-efficiency} into~\cref{eq:fano}, immediate we get

\begin{corollary}[Accuracy]
\begin{equation}
    P_e \gtrsim \frac{H(Y) - m\cdot \frac{1}{2}\log(1+{\rm SNR}) }{\log |\mathcal{V}|}
    \label{eq:accuracy}
\end{equation}
\label{corollary:accuracy}
\end{corollary}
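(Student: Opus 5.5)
The plan is to chain the Fano bound in \cref{eq:fano} with the data-efficiency lower bound of \cref{lemma:data-efficiency}. Then we substitute the Gaussian-channel expression for the mutual information and solve for $P_e$. All ingredients are stated above, so the argument is mostly bookkeeping plus one approximation step.

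First, \cref{lemma:data-efficiency} gives $H(Y\mid X^m) \ge H(Y) - m\, I(Y;X)$. This holds for any $m$ hidden states that are conditionally i.i.d.\ given $Y$. Second, Fano's inequality in its simplified form \cref{eq:fano} gives $H(Y\mid X^m) \le P_e \log(|\mathcal{V}|-1)$. Sandwiching $H(Y\mid X^m)$ between the two yields
\[
P_e \log(|\mathcal{V}|-1) \ \ge\ H(Y) - m\, I(Y;X).
\]
Third, the Gaussian channel model $X = Z + N$ lets us replace $I(Y;X)$ by $\tfrac12\log(1+{\rm SNR})$, exactly as was done in deriving \cref{corollary:snr}. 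Dividing by $\log(|\mathcal{V}|-1)>0$ and using $\log(|\mathcal{V}|-1)\approx\log|\mathcal{V}|$ for $|\mathcal{V}|\gg 1$ then gives \cref{eq:accuracy}.

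The main obstacle is justifying the approximations honestly. This is why the statement is phrased with $\gtrsim$ rather than $\ge$.

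\begin{itemize}
\item Dropping $H_b(P_e)$ in Fano's inequality loosens the upper bound on $H(Y\mid X^m)$ in the wrong direction for a strict inequality. The rigorous form is $P_e \ge \bigl(H(Y) - m I(Y;X) - H_b(P_e)\bigr)/\log(|\mathcal{V}|-1)$. Since $H_b(P_e)\le \log 2$, this becomes $P_e \ge \bigl(H(Y) - m I(Y;X) - \log 2\bigr)/\log(|\mathcal{V}|-1)$, which is the usual weak Fano form. I would state this exact version first. I would then note that the $\log 2$ term is negligible relative to $\log|\mathcal{V}|$, which recovers \cref{eq:accuracy} up to the $\gtrsim$.
\item The identity $I(X;Y)=\tfrac12\log(1+{\rm SNR})$ is really a capacity, i.e.\ an upper bound attained only by Gaussian signal. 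Using it as an upper bound on $I(Y;X)$ is in the right direction: it only weakens the lower bound on $P_e$. So the inequality survives if we read it as $I(Y;X)\le\tfrac12\log(1+{\rm SNR})$.
\end{itemize}

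I would close by remarking on the monotonicity consequence. The right-hand side of \cref{eq:accuracy} is decreasing in ${\rm SNR}$, so a regularizer that raises ${\rm SNR}$ lowers the floor on the error probability.
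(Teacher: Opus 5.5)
Your proposal is correct and follows the paper's argument: sandwich $H(Y\mid X^m)$ between \cref{lemma:data-efficiency} and the simplified Fano bound \cref{eq:fano}, substitute $I(Y;X)=\tfrac12\log(1+{\rm SNR})$, and replace $\log(|\mathcal{V}|-1)$ by $\log|\mathcal{V}|$. Your extra care makes explicit what the paper's one-line derivation leaves inside $\gtrsim$: you keep the $-\log 2$ coming from $H_b(P_e)\le\log 2$ instead of dropping it, and you read the Gaussian-channel formula as an upper bound on $I(Y;X)$ so the inequality direction is preserved (this holds as stated for a scalar channel, whereas with $d$-dimensional isotropic noise the bound carries a factor $d$).
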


\Cref{corollary:accuracy} indicates that if we observe significant improvement on training accuracy, we know that SNR is higher.


\section{Data Efficiency}

In this section we present the results of experiments on Llama3 3B and 8B using $\frac{1}{2}$, $\frac{1}{4}$, $\frac{1}{8}$, $\frac{1}{16}$, and $\frac{1}{32}$ of the dataset in~\cref{fig:less-data-3b-8b}, where we see similar trend as in Llama3 1B (\cref{fig:less-data}).

\begin{figure}[h!]
    \centering
    \begin{subfigure}{0.49\linewidth}
    \centering
    \includegraphics[width=1.0\linewidth]{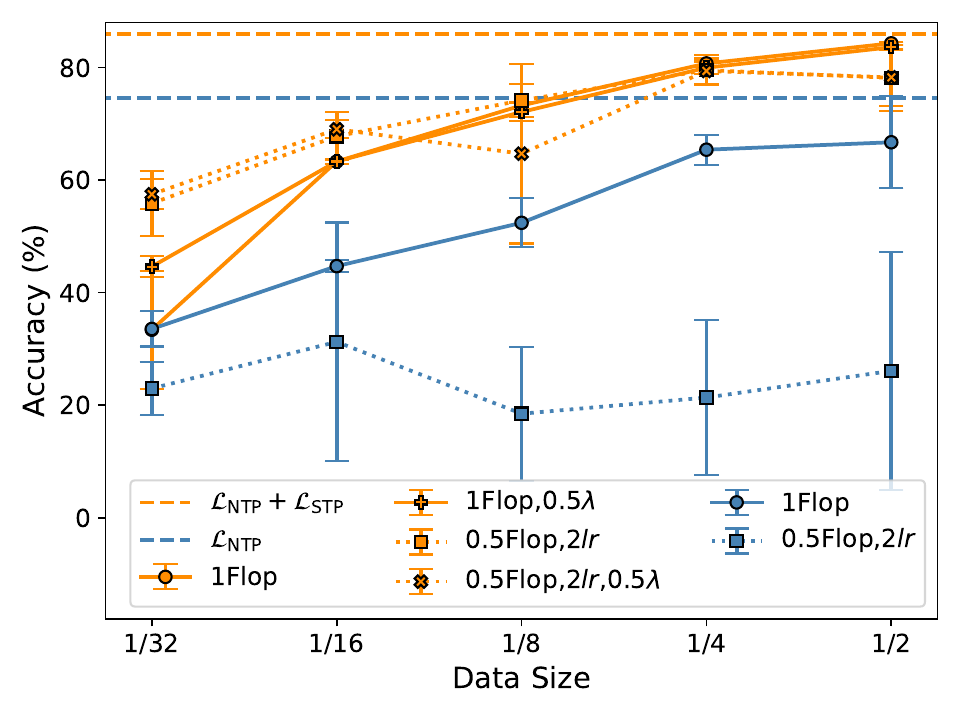}
    \caption{\small Llama3 3B}
    \end{subfigure}
    \begin{subfigure}{0.49\linewidth}
    \centering
    \includegraphics[width=1.0\linewidth]{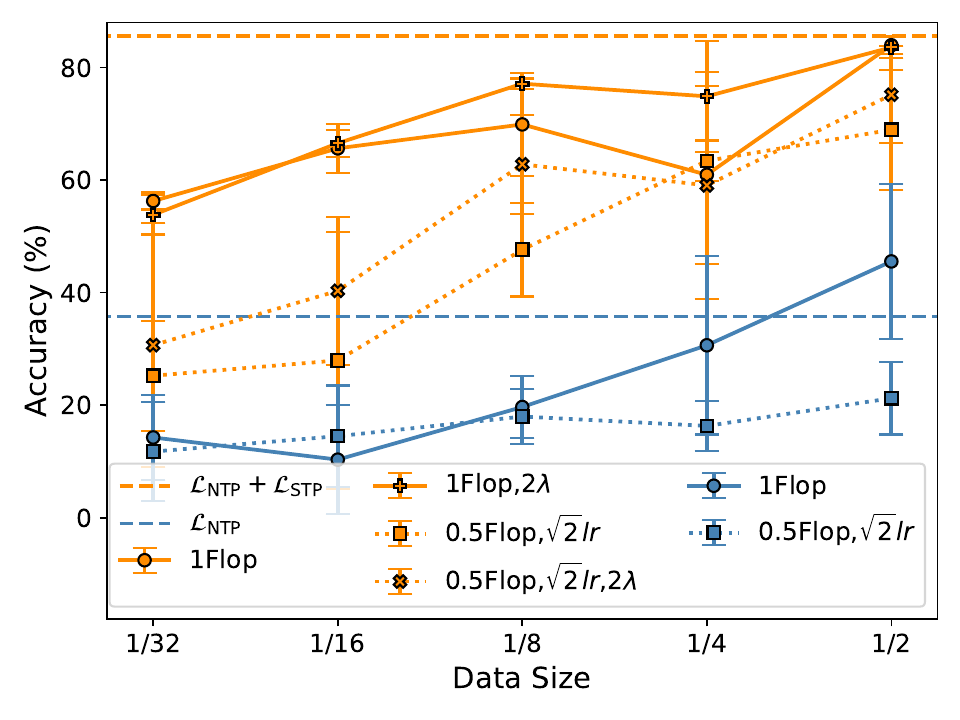}
    \caption{\small Llama3 8B}
    \end{subfigure}
    \caption{\small Semantic Tube (our approach) and regular fine-tuning with $\frac{1}{2}$, $\frac{1}{4}$, $\frac{1}{8}$, $\frac{1}{16}$, and $\frac{1}{32}$ dataset on (a) Llama3 3B and (b) Llama3 8B.}
    \label{fig:less-data-3b-8b}
\end{figure}

\section{Regular Expression Samples}

We list in~\cref{tab:dotstar-samples} a few samples from the SYNTH dataset that end with either ``\texttt{.*}'' or ``\texttt{.*.*}'', which are functionally equivalent.

\begin{table}[h!]
    \caption{\small Regular expression samples from the SYNTH dataset that end with either ``\texttt{.*}'' or ``\texttt{.*.*}'', which are functionally equivalent.}
    \centering
    \begin{tabular}{c}
    \toprule
    {\small \textbf{Regular Expressions}} \\
    \midrule
    \small .*([a-z])$|$([AEIOUaeiou])$|$([A-Za-z]).*  \\
    \small .*([A-Za-z]).*([0-9]).*.* \\
    \small ((dog)(.*)).*([AEIOUaeiou]).* \\
    \small (dog).*((truck)$|$([A-Z])$|$([0-9])).* \\
    \small .*(.)\&([0-9])\&(dog).* \\
    \small .*(dog).*((.)*).*.* \\
    \small .*dog.*[a-z].*.* \\
    \bottomrule
    \end{tabular}
    \label{tab:dotstar-samples}
\end{table}

\section{Tuning $\lambda$}

In this section, we present the accuracy vs. $\lambda$ curves for the various configurations of Semantic Tubes, Two Views, and Mask detailed in~\cref{sec:ablation}. As shown in~\cref{fig:lambda-variations}, we observe across all cases that the curve is concave, most of the time with a maximum reached at $\lambda$ values between $0.01$ and $0.8$. Furthermore, when $\lambda$ exceeds the optimal value, we occasionally observe a precipitous drop in accuracy accompanied by a drastic increase in standard deviation. Collectively, these results provide strong evidence supporting the validity of (P4).

\begin{figure}[h!]
    \centering
    \begin{subfigure}{0.49\linewidth}
    \centering
    \includegraphics[width=1.0\linewidth]{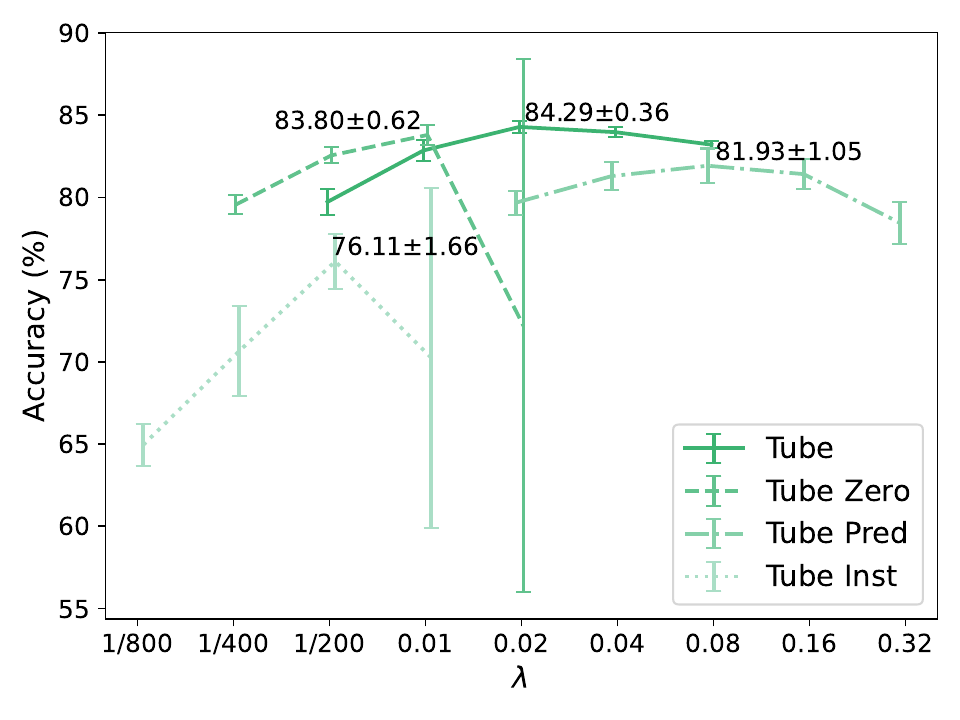}
    \caption{\small Semantic Tube}
    \end{subfigure}
    \begin{subfigure}{0.49\linewidth}
    \centering
    \includegraphics[width=1.0\linewidth]{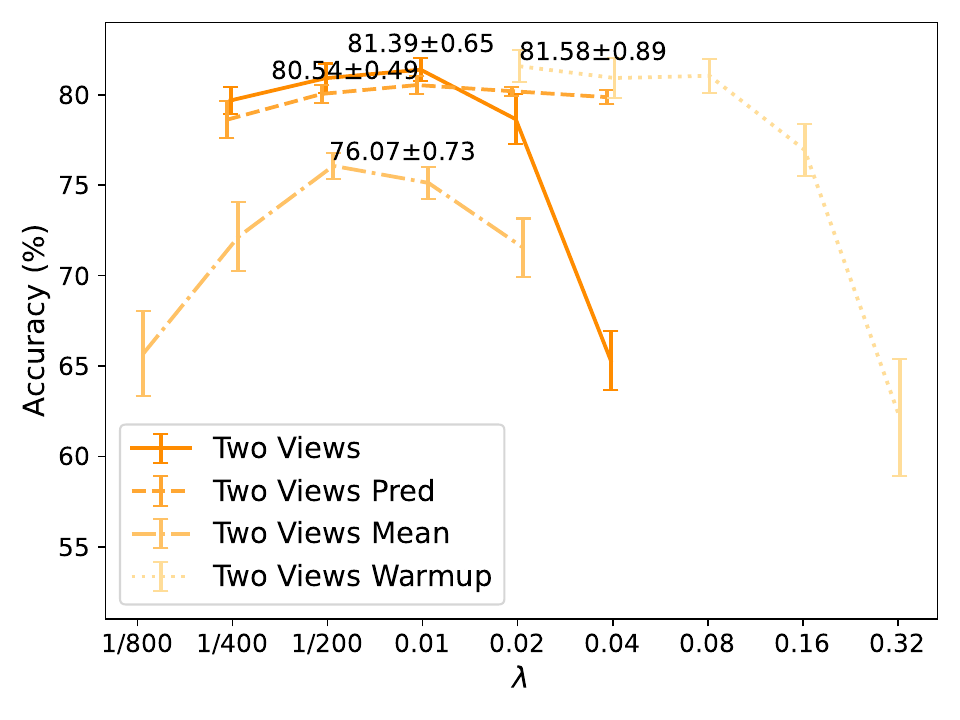}
    \caption{\small Two Views}
    \end{subfigure}
    \begin{subfigure}{0.49\linewidth}
    \centering
    \includegraphics[width=1.0\linewidth]{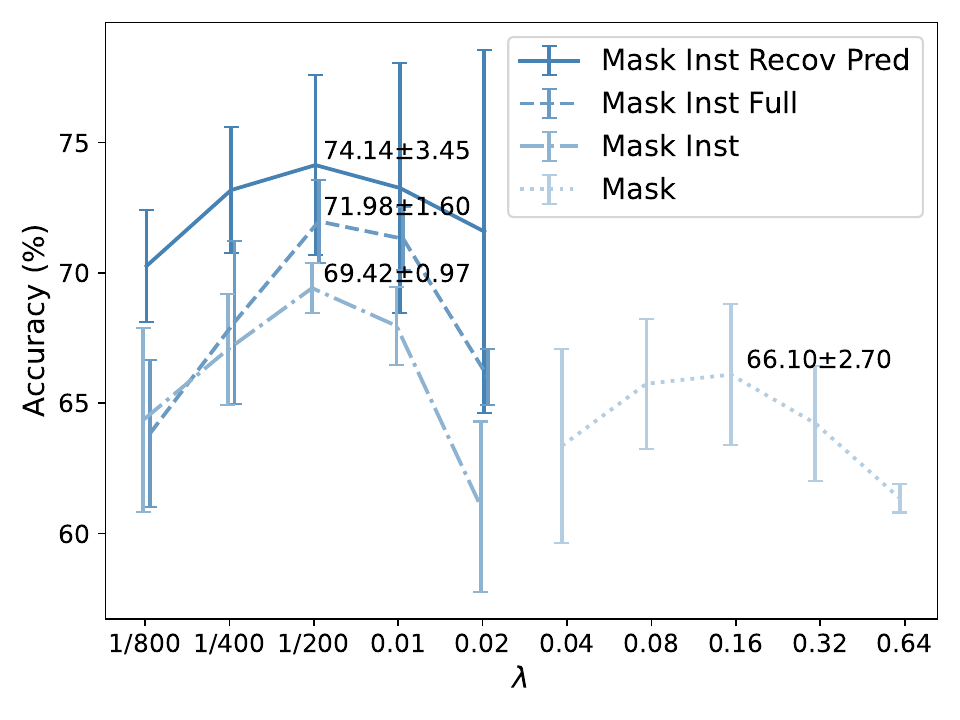}
    \caption{\small Mask}
    \end{subfigure}
    \caption{\small Tuning $\lambda$ for various configurations of (a) Semantic Tube, (b) Two Views, and (c) Mask. In all cases, the accuracy vs. $\lambda$ curve is concave. We also observe that when $\lambda$ exceeds the optimal value, accuracy declines rapidly while the standard deviation increases sharply, indicating that $\lambda \ll 1$ is preferred.}
    \label{fig:lambda-variations}
\end{figure}

\section{Ablation}
\label{sec:appendix-ablation}
\textbf{Semantic Tube}:~We ablate several variations of the Semantic Tube configuration:
\begin{itemize}
    \item \textbf{Zero}: Instead of randomly picking $s$, this variation fixes the start index $s=0$. The loss becomes $\mathcal{L}_{\rm STP} = 1 - \cos(h_r - h_0, h_t - h_r)$.
    \item \textbf{Pred}: We introduce a learnable linear projector $P$ and modify the loss to $\mathcal{L}_{\rm STP} = 1 - \cos(P(h_r - h_s), h_t - h_s)$. aligns the approach more closely with the JEPA style, utlizing a non-identity predictor. $P$ is randomly initialized and trained during fine-tuning.
    \item \textbf{Inst}: We incorporate instructions into the token sequence $x_{\le t}$. These instructions consist of system prompt such as \texttt{"Convert natural language to regular expression"}.
\end{itemize}

\textbf{Two Views}:~This configuration adopts the LLM-JEPA style two-view structure, where \textit{query} and \textit{answer} represent two views of the same concept. Note that we retain the $\mathcal{L}_{\rm STP}$ formulation but fix $s = 0$ and set $r$ to the index of the last token of the \textit{query}.
\begin{itemize}
    \item \textbf{Warmup}: We linearly warm up $\lambda$  throughout the training process.
    \item \textbf{Pred}: Identical to the Pred variation in the Semantic Tube configuration.
    \item \textbf{Mean}: Instead of the difference vector $h_r - h_s$, we use the average embedding $\frac{1}{r - s + 1}\sum_{s \le i \le r}h_i$. Consequently, the loss becomes $\mathcal{L}_{\rm STP} = 1 - \cos(\frac{1}{r - s + 1}\sum_{s \le i \le r}h_i, \frac{1}{t - r + 1}\sum_{r \le j \le t}h_j)$. This is inspired by BERT Mean Pooling~\citep{kim-etal-2021-self}.
\end{itemize}

\textbf{Mask}:~This variation is inspired by BERT mask-and-recover training objective~\citep{devlin2019bert}. Given a token sequence $x_{\le t}$, we randomly pick a span $[s, r]$ and replace the tokens within this span with the \texttt{[MASK]} token. Let $y_{\le t}$ denote the masked sequence and $g_t=f(y_{\le t})$. The loss is defined as $\mathcal{L}_{\rm mask} = 1 - \cos(h_r - h_s, g_t)$. This can be interpreted as recovering the information of the masked tokens using the representation of the masked sequence $y_{\le t}$.
\begin{itemize}
    \item \textbf{Full}: Instead of aiming to match $h_r - h_s$, we target $h_t$. The loss becomes $\mathcal{L}_{\rm Mask} = 1 - \cos(h_t, g_t)$, corresponding to the recovery of the full masked sequence rather than just the masked span.
    \item \textbf{Pred}: Identical to the Pred variation in the Semantic Tube configuration.
    \item \textbf{Inst}: Identical to the Inst variation in the Semantic Tube configuration.
\end{itemize}

\textbf{Curvature}:~This variation is inspired by the curvature straightening objective~\citep{henaff2021}. Let $\theta_i$ be the angle between $h_i - h_{i-1}$ and $h_{i+1} - h_i$. The loss is defined as $\mathcal{L}_{\rm Curvature} = \frac{1}{t}\sum_{i\le t} |\theta_i|$.
\begin{itemize}
    \item \textbf{Sign}: Replaces $|\theta_i|$ with $\theta_i$ (allowing for signed curvature).
\end{itemize}

The fact that Pred yields inferior performance in both the Semantic Tube and Two Views configurations supports (P5).

The $p$-values comparing variations and options are presented in~\cref{tab:p-values,tab:p-values-fine-grained}.

\begin{table}[h!]
\centering
\caption{\small Pairwise $p$-values comparing variation families. A cell is populated only if the mean accuracy of the row method exceeds that of the column method. $p$-values are computed using a paired, one-tailed $t$-test, restricted to the best-performing variant from each family.}
\label{tab:p-values}
\begin{tabular}{c|ccc}
\toprule
& \small \textbf{Two View} & \small \textbf{Mask} & \small \textbf{Curvature} \\
\midrule
\small \textbf{LLM-JEPA2} & \small 1.14e-3 & \small1.77e-3 & \small 3.04e-5 \\
\small \textbf{Two View} & & \small 4.76e-3 & \small 5.10e-5 \\
\small \textbf{Mask} & & & \small 1.28e-4 \\
\bottomrule
\end{tabular}
\end{table}

\begin{table}[h!]
\centering
\caption{\small Pairwise $p$-values comparing options within each variation family. A cell is populated only if the mean accuracy of the row option exceeds that of the column option. $p$-values are computed using a paired, one-tailed $t$-test. Values exceeding $0.05$ are struck through.}
\label{tab:p-values-fine-grained}
\begin{tabular}{c|ccc|c|ccc}
\toprule
& \small \textbf{Zero} & \small \textbf{Pred} & \small \textbf{Inst} & & \small \textbf{2View} & \small \textbf{Pred} & \small \textbf{Mean} \\
\midrule
\small \textbf{LLM-JEPA2} & \small \cancel{0.0534} & \small 2.03e-3 & \small 2.34e-4 & \small \textbf{2View+Warmup} & \small \cancel{0.265} & \small 0.0426 & \small 9.16e-6 \\
\small \textbf{+Zero} & & \small 0.0185 & \small 5.56e-4 & \small \textbf{2View} & & \small \cancel{0.0689} & \small 1.19e-6 \\
\small \textbf{+Pred} & & & \small 2.97e-4 & \small \textbf{+Pred} & & & \small 2.78e-4 \\
\midrule
\midrule
 & \small \textbf{Inst,Recov} & \small \textbf{Inst} & \small \textbf{Mask} & & & \small \textbf{Signed} \\
\midrule
\small \textbf{Mask}+\textbf{\textit{all}} & \small \cancel{0.0629} & \small 0.0159 & \small 4.02e-3 & \\
\small \textbf{-Pred} & & \small 2.34e-3 & \small 1.18e-3 & \small \textbf{Curvature} & & \small 0.0368 \\
\small \textbf{-Recov,Pred} & & & \small 0.0103 & \\
\bottomrule
\end{tabular}
\end{table}





\end{document}